\newtheorem{lemma}{Lemma}
\newtheorem{remark}{Remark}
\newtheorem{corollary}{Corollary}
\DeclareMathOperator*{\argmin}{arg\,min}
\title{\LARGE \bf
Semi-autonomous Teleoperation using Differential Flatness of a Crane Robot for  Aircraft In-Wing Inspection
}
\author{ Wade Marquette, Kyle Schultz, Vamsi Jonnalagadda, Benjamin Wong, Joseph Garbini, and Santosh Devasia
\thanks{
Mechanical Eng. Dept., U. of Washington, Seattle WA 98195, USA
        {\tt\small wm25@uw.edu, kylews@uw.edu, jnvk@uw.edu, bycw@uw.edu , garbini@uw.edu, devasia@uw.edu}}%
\thanks{
Exemption of research was granted by the University of Washington Institutional Review Board (\#STUDY00021303).}%
\thanks{
This is an extended version of an article submitted to IEEE for possible publication. Copyright may be transferred without notice, after which this version may no longer be accessible.}
}
\begin{document}

\maketitle
\thispagestyle{empty}
\pagestyle{empty}

\begin{abstract}\label{sec:abstract}
Visual inspection of confined spaces such as aircraft wings is ergonomically challenging for human mechanics. 
This work presents a novel crane robot that can travel the entire span of the aircraft wing, enabling mechanics to perform inspection from outside of the confined space. 
However, teleoperation of the crane robot can still be a challenge due to the need to avoid obstacles in the workspace and potential oscillations of the camera payload. 
The main contribution of this work is to exploit the differential flatness of the crane-robot dynamics for designing reduced-oscillation, collision-free time trajectories of the camera payload for use in teleoperation.
Autonomous experiments verify the efficacy of removing undesired oscillations by 89\%.
Furthermore, teleoperation experiments demonstrate that the controller eliminated collisions (from 33\% to  0\%)  when
12 participants performed an inspection task   with the use of proposed trajectory selection  when compared to the case without it. Moreover, even discounting the failures due to collisions, the proposed approach  improved task efficiency by 18.7\% when compared to the case without it.
\end{abstract}

\begin{keywords}
Inspection robots, Manufacturing, Robot motion, 
Collision avoidance.
\end{keywords}

\section{Introduction}

Confined-space inspection  is a major aspect of aerospace manufacturing and maintenance, especially inside aircraft wings (where fuel is stored), which are ergonomically challenging,  hazardous environments to work in.
For example, mechanics need to don protective suits and respirators for safety in such spaces, which makes the work cumbersome~\cite{eeloscope}. Moreover, ensuring safety requires regular check-ins from an outside partner. 
These difficulties of operating in confined spaces motivate the development of robotic solutions that allow mechanics to perform their work from outside the confined space.
A challenge with typical robotic inspection solutions is that they require repeated time-consuming installation and removal for each of the many separated internal structure segments (bays, see~Fig. \ref{fig:wing}) of the wing in commercial aircraft architectures.
An additional challenge is that  teleoperation (which takes advantage of human expertise to perform complex tasks) can be slow since it is difficult for humans to manage multiple tasks such as (i)~inspection of the space and (ii)~avoiding obstacles -- especially, in complex confined spaces.
This work presents a robotic inspection system that can move through the entire wing without re-installation in each bay and  develops a control system to aid collision avoidance and thereby, improve teleoperation performance.

\begin{figure}[!t]
\centering
\includegraphics[width=0.475\textwidth]{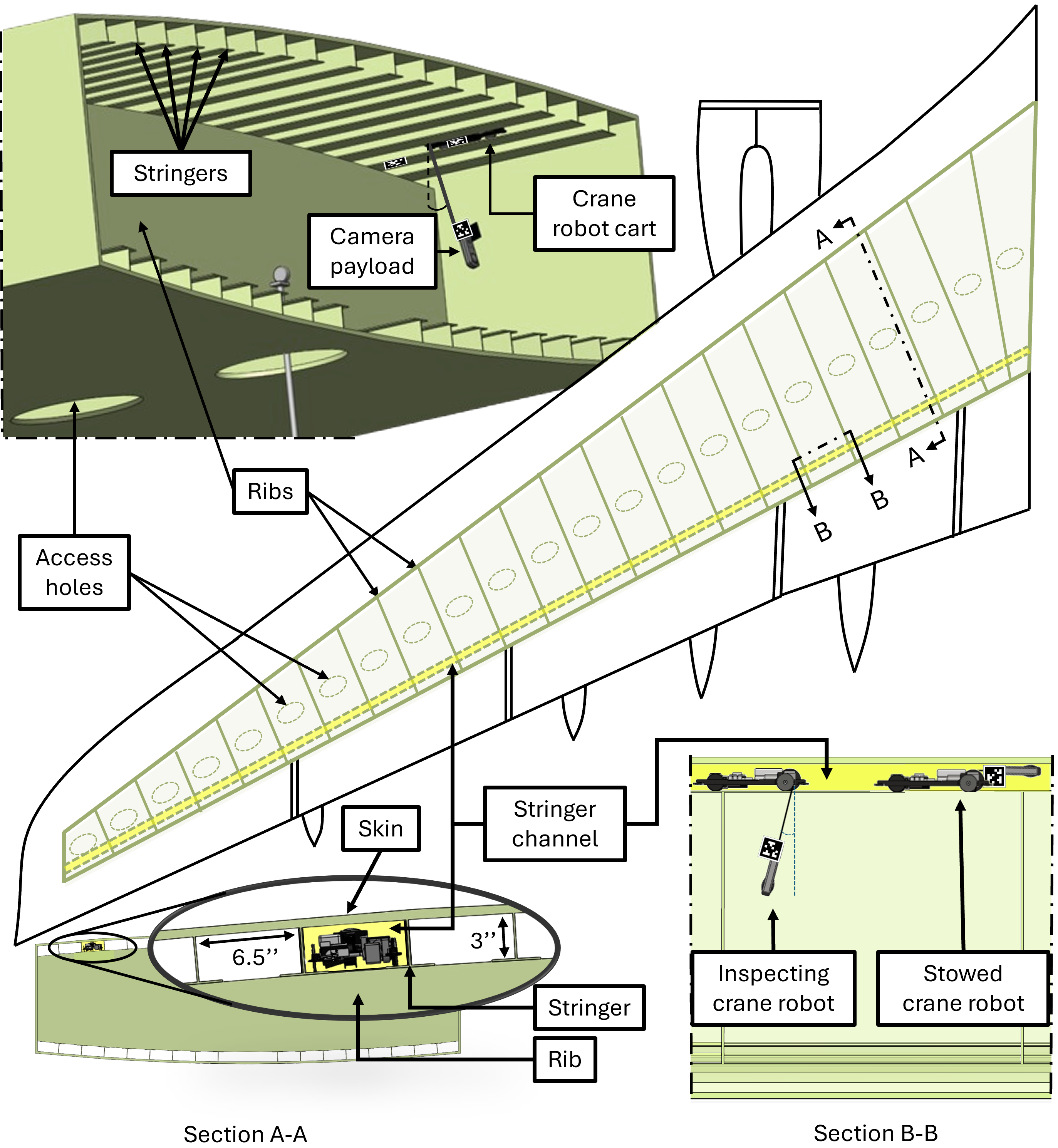}
\caption{
The crane robot performs inspection along the entire wing by traversing in a single stringer channel.
(Top-left) Partial view of an aircraft wing bay  with crane robot moving inside the channel between two stringers, suspending a  camera payload for inspection. The wing bays are partitioned into separate spaces by ribs and 
each space is typically accessed through narrow access holes.
(Center) A two-dimensional schematic of the wing with a sample stringer channel  highlighted in yellow, spanning the length of the wing, which provides access (over the ribs, with the camera stowed) between adjacent  bays.
(Bottom left, Section A-A) Cross section with the stowed crane robot in the stringer channel (highlighted in yellow) formed below the upper skin and between adjacent stringers enabling passage over the ribs.
(Bottom right, Section B-B) Crane robot in the  stringer channel (yellow highlight). 
}
 \vspace{-0.2in}

\label{fig:wing}
\end{figure}

The current work avoids the problem of installation/removal in each of the separate bays by designing a compact robot that can fit and move inside the channel between stringers, and thereby, access the entirety of the wing  operating akin to a gantry crane with the suspended camera as the payload. The camera can be retracted to cross the ribs that separate adjacent bays. Additionally, to aid teleoperation, this work reduces  motion-induced oscillations and automates collision avoidance during teleoperation of the crane robot. 
In particular, the differential flatness of the crane-robot dynamics is used to design
reduced-oscillation, collision-free time trajectories of the camera payload.
The resulting teleoperation controller allows the operator to directly specify camera payload trajectories while autonomously avoiding large oscillations and potential collisions. Experimental evaluations show that the teleoperation assistance reduces undesired oscillation by 89\% and user trials of 12 participants demonstrate the mitigation of collision and an 18.7\% improvement in task completion time when neglecting collision compared to the case without the teleoperation assistance.

\section{Related works}\label{sec:related_works}

\subsection{Design of robots for confined spaces}
Currently available confined space robotic solutions cannot access the entire wing during manufacturing of the commercial aircraft architecture. For example, 
conventional manipulator-based, continuum-type, and snake robot architectures for confined space inspection~\cite{path_inspection},
hole cleaning~\cite{holecleaning}, and multi-tasks~\cite{snake} 
within aircraft wings cannot move between bays through the ribs over the span of the wing and require manual installation and removal for each bay. Movement between bays is achieved with the Eeloscope~\cite{eeloscope} by swimming through holes in the ribs when fuel is present inside aircraft wing tanks. However, this solution is not applicable in the absence of fuel, e.g., during initial aircraft manufacturing. Another approach is using 
mobile robots that drive through rib cutouts in aircraft wings if the driving surfaces are smooth~\cite{driving_hole}, but this solution does not apply to larger aircraft with uneven surfaces due to stiffening stringer structures on the inner-skin.
The crane robot overcomes the challenge of traversing separated bays by using a small cross-sectional area  to traverse the bays through narrow ($6.5$ in $\times$ $3$ in) stringer channels that span the entire wing. Previous works have introduced robots to move in narrow spaces such as  pipes and channels, e.g., snake~\cite{narrow_snake} and inchworm~\cite{pneumatic_worm_pipe} robots.
However, these designs rely on the entire cross section to be continuous, but the stringer channels tend to be open away from the skin. 
Therefore, this work presents a novel crane robot that uses wheels to stay inside the stringer channel, similar to wheeled pipe crawling robots~\cite{wheeled_pipe}. Moreover, the crane robot exploits the opening in the stringer channel to suspend a camera via a pulley mechanism to perform inspection tasks, as shown in Fig.~\ref{fig:wing}. 
In inspection tasks, trajectory tracking is critical to move the payload camera along a desired path, which is different from the goal in traditional gantry cranes used in the aerospace industry that seek to move objects from one point to another.

\subsection{Assisting teleoperation in complex environments}
The crane robot (with controls to position on the stringer channel and the camera position) is analogous to a variable-length gantry crane~\cite{crane_control}. Therefore, it shares similar challenges in teleoperation as industrial crane systems, where the operator controls both the cart position and the cable length independently~\cite{crane_control}.
This conventional control approach, where the operator specifies  trajectories that do not cause undesired oscillations of the payload, can be challenging and require extensive training to learn how to manage the gantry-crane dynamics. To avoid such challenges in teleoperation, this work develops
a semi-autonomous teleoperation control for the crane robot, which can make teleoperation easier as shown in~\cite{semi_auto_obst}.
The human operator only specifies a reference point to generate trajectories for the camera payload, and handling of control complexities such as obstacle avoidance and unwanted oscillations (due to payload dynamics) is managed autonomously to assist teleoperation.
Input shaping is a widely used method for reducing residual oscillations in crane positioning~\cite{input_shaping}.
However, input shaping does not ensure payload trajectory tracking, which is important near obstacles in confined spaces.
Alternatively, gantry cranes have been shown to be differentially flat, allowing all states and inputs to be represented as functions of the output and its time derivatives~\cite{fliess}.
This work leverages the differential flatness property of the crane-robot dynamics as the basis of the teleoperation assistance where camera payload coordinates are considered as the output~\cite{kolar2013,kolar2017}.
Previous work has shown that flatness-based control can reject undesired payload oscillations caused by disturbances during autonomous trajectory tracking and positioning~\cite{yu}.
In addition to crane applications, differential flatness has been widely applied to a variety of systems, including cable-suspended UAV path planning~\cite{cable_UAV}, UAV-UGV cooperative landing~\cite{UAV_UGV}, and aerobatic trajectories of VTOL aircraft~\cite{VTOL}. 
These autonomous applications are based on pre-defined trajectories or objectives but do not
address real-time trajectory generation to assist avoiding obstacles during human teleoperation. Specifically, the proposed crane-robot assistance modulates the operator's reference input to avoid collisions, and additionally, since the trajectory is tracked accurately, the approach also reduces uncontrolled oscillations.
Previous works have used the  flatness-based approach for gantry crane teleoperation using sufficiently-smooth  online S-curve velocity trajectory generation. However, this approach is used in  an environment without obstacles~\cite{craneOnline}. 
Moreover, this approach relies on a fixed-length payload and a linearized model, which may not be sufficiently accurate for crane robot operations, where varying payload lengths and rapid movements can cause larger swing angles.
Therefore, this proposed work develops a trajectory generation approach for the gantry-crane model using the differential flatness of the dynamics, which accounts for both (i) the nonlinearity due to the swing dynamics and (ii) the variable length. 
Thus, the proposed approach enables easier (collision-free and reduced-oscillation), semi-autonomous teleoperation of the crane robot, which in turn, improves operator performance during confined-space inspection.

\section{Crane robot description}\label{sec_system_Des}

To deploy inside the narrow stringer channel formed by the flanges of the stringer and the wing skin, in Fig.~\ref{fig:wing}, the crane robot frame is segmented into two smaller pieces which are subsequently attached (once inside the stringer channel) with a threaded rod. This allows installation through the narrow gap of the stringer channel while being wide enough to drive on the flanges as  as depicted in Fig. \ref{fig:crane_design}(a).
Once installed, the crane robot drives through the channel on four vertical wheels with one wheel driven by a motor.
The crane robot has four side wheels, shown in Fig. \ref{fig:crane_design}(b), which can contact  the stringer webs and help realign the crane robot with the stringer channel.

\begin{figure}[!t]
\centering
\includegraphics[width=0.475\textwidth]{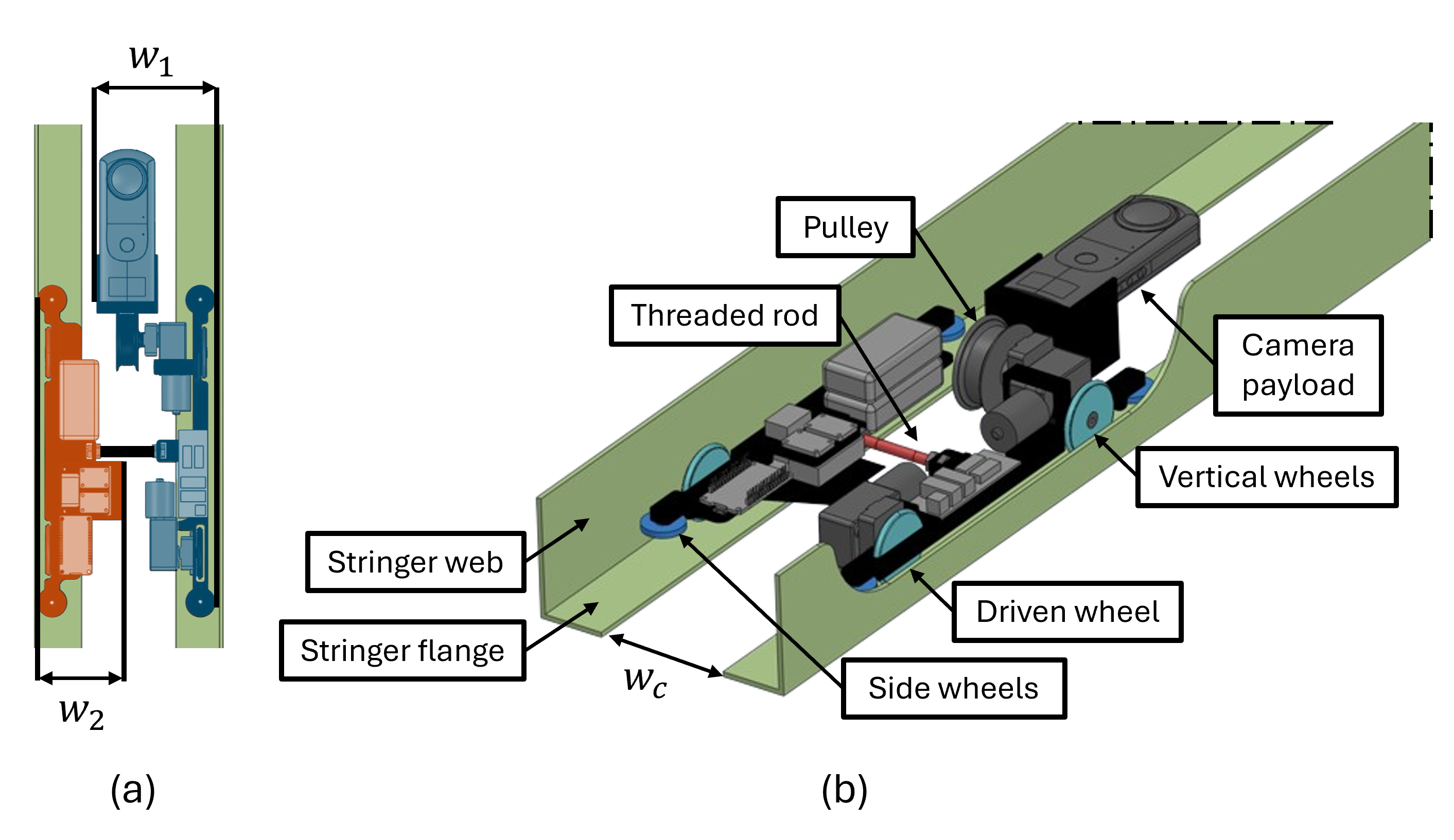}
\caption{
Installation of the crane robot in the stinger channel.
(a) The crane robot frame segmented in two pieces with widths $w_1=3.75$ in and $w_2=2.75$ in and thickness less than $2.5$ in so each segment can be installed diagonally through the channel opening of width $w_c=3.5$ in. (b) The installation is completed by connecting the segmented frame with a threaded rod (red). The width of the connected segments is larger than the channel opening width, $w_c$, enabling the  crane robot to drive on the stringer flanges with its vertical wheels (cyan) while its side wheels (blue) contact the stringer web to correct channel misalignment.
}
\label{fig:crane_design}
\end{figure}

The crane robot performs inspection tasks by suspending a camera into the confined space using a pulley mechanism.
To traverse adjacent bays, the camera payload is stowed and released from the channel by wrapping and unwrapping the camera around the pulley.
The crane robot deploys a wireless 360 camera payload, enabling the operator to perform remote inspection in any direction by panning a tablet application, similar to such use in other applications such as  power-line inspection robots~\cite{linespyx}.

Operators receive overall perspective of the environment using 
an external camera to reduce disorienting effects associated with system movement of the robot's onboard camera~\cite{camera_following}.
Specifically, an external camera, located at the access hole, provides an external view of both the crane robot and obstacles, delivering  situational awareness to the operator shown in Fig. \ref{fig:block_diagram}.
The external camera also provides vision-based state and output  feedback using fiducial markers (Apriltags~\cite{apriltag})
located on the crane-robot's cart and camera payload along with a reference tag located in the confined space
for tracking control, e.g., as in~\cite{concrete_hammer}.
Measurements from the fiducial markers provide swing angle information and global-correction updates for crane-position and payload-length  estimates from motor encoders. 
Operators use  camera feedback to visualize the pose of the crane robot and use a joystick, as in  industrial crane control~\cite{interfaces_study}, to send wireless, horizontal-and-vertical, camera-payload velocity commands.

\begin{figure}[!t]
\centering
\includegraphics[width=0.45\textwidth]{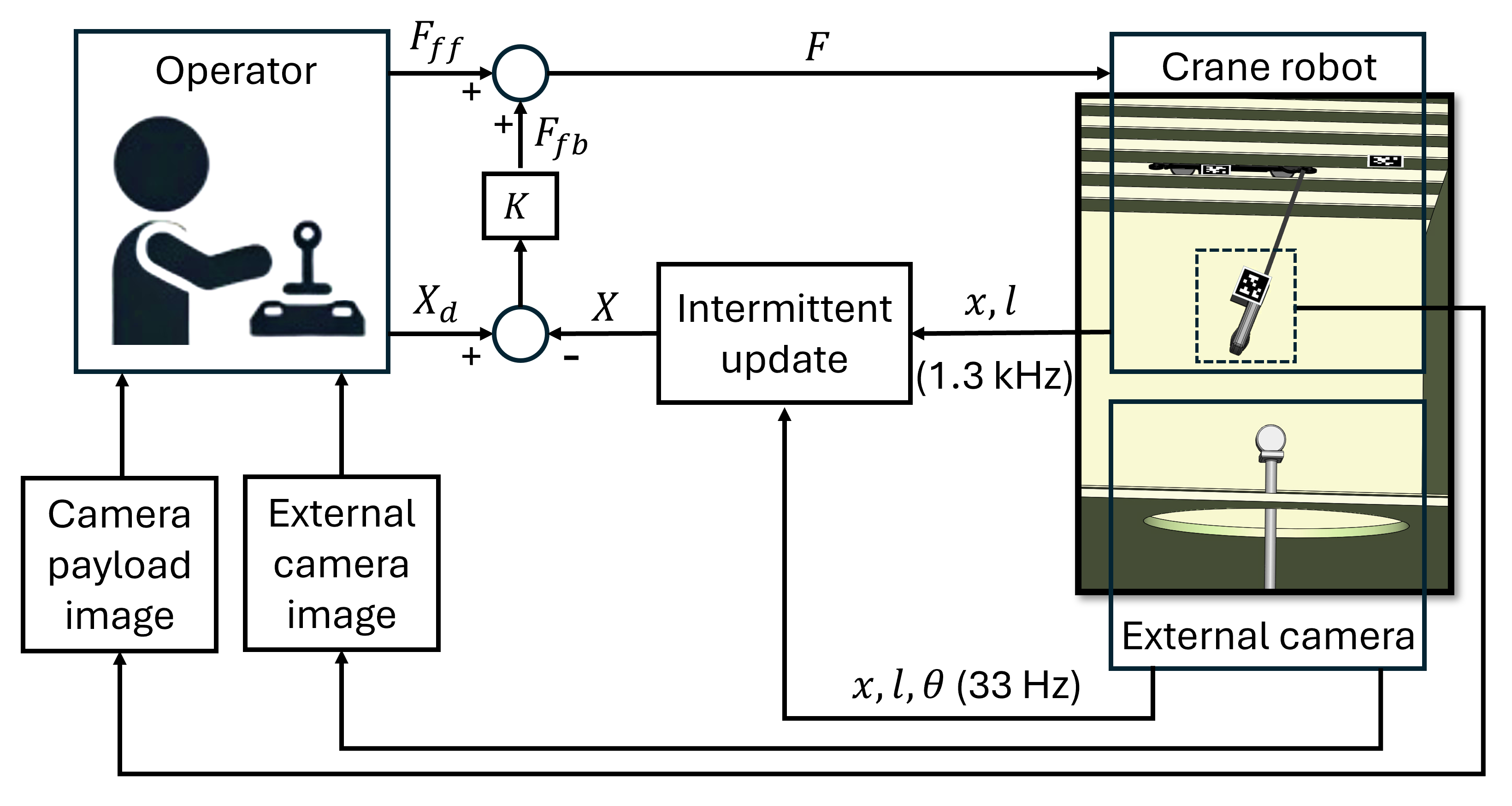}
\caption{
Crane-robot's control scheme.
The external camera at the access hole measures the cart position, $x$, the payload length, $l$, and the swing angle, $\theta$, from the fiducial markers at a sampling rate of 33 Hz, which intermittently updates feedback collected from the motor encoders at a sampling rate of 1.3 kHz to construct the crane robot's states, $X$.
From the operator workstation, the feedforward input, $F_{ff}$, and desired states, $X_d$, are specified through a joystick interface by the operator observing the external camera and camera payload image.
The combined feedforward force, $F_{ff}$, and feedback force, $F_{fb}$, is the applied crane-robot input, $F$.
}
\label{fig:block_diagram}
\end{figure}

Global correction updates use three fiducial markers illustrated in Fig.~\ref{fig:apriltag_diagram} to measure cart position, $x$, camera payload length, $l$, swing angle, $\theta$, and horizontal and vertical camera payload positions, $y_1$ and $y_2$, respectively.
Each fiducial marker returns six degree-of-freedom measurements, with three translations and three rotations from the camera. However, marker translation measurements are more stable than rotation measurements from the access hole camera distance, so states and outputs are computed using marker center coordinates and crane-robot kinematics.
The reference marker defines the origin at coordinates $(0,0)$ with a fixed homogeneous transformation (to remove rotation axis jitter) such that horizontal and vertical distances to markers on the crane robot can be computed through homogeneous transformations as in~\cite{atag_relative}.
Therefore, a marker placed on the camera payload's center of mass directly measures its horizontal and vertical positions of $y_1$ and $y_2$, respectively.
From the measured horizontal distance to the cart marker, $d_1$, and a fixed measurement from the cart marker to the pulley, $d_2$, the cart position, $x$, is measured as $x = d_1 + d_2$.
With the cart position, $x$, the swing angle, $\theta$, can then be measured as $\theta = \arctan\left(\frac{y_1 - x}{y_2}\right)$.
Given the swing angle, $\theta$, the length of the camera payload is $l = -\frac{y_2}{\cos\left( \theta \right)}$.
Measurement noise was quantified as standard deviations when the crane robot was at rest, with the camera payload suspended at the center of the confined space as $\sigma_x = \sigma_{y_1} = 0.0007$ m, $\sigma_l = \sigma_{y_2} =  0.0059$ m, and $\sigma_\theta = 0.0026$ rad.

\begin{figure}[t]
\centering
\includegraphics[width=0.35\textwidth]{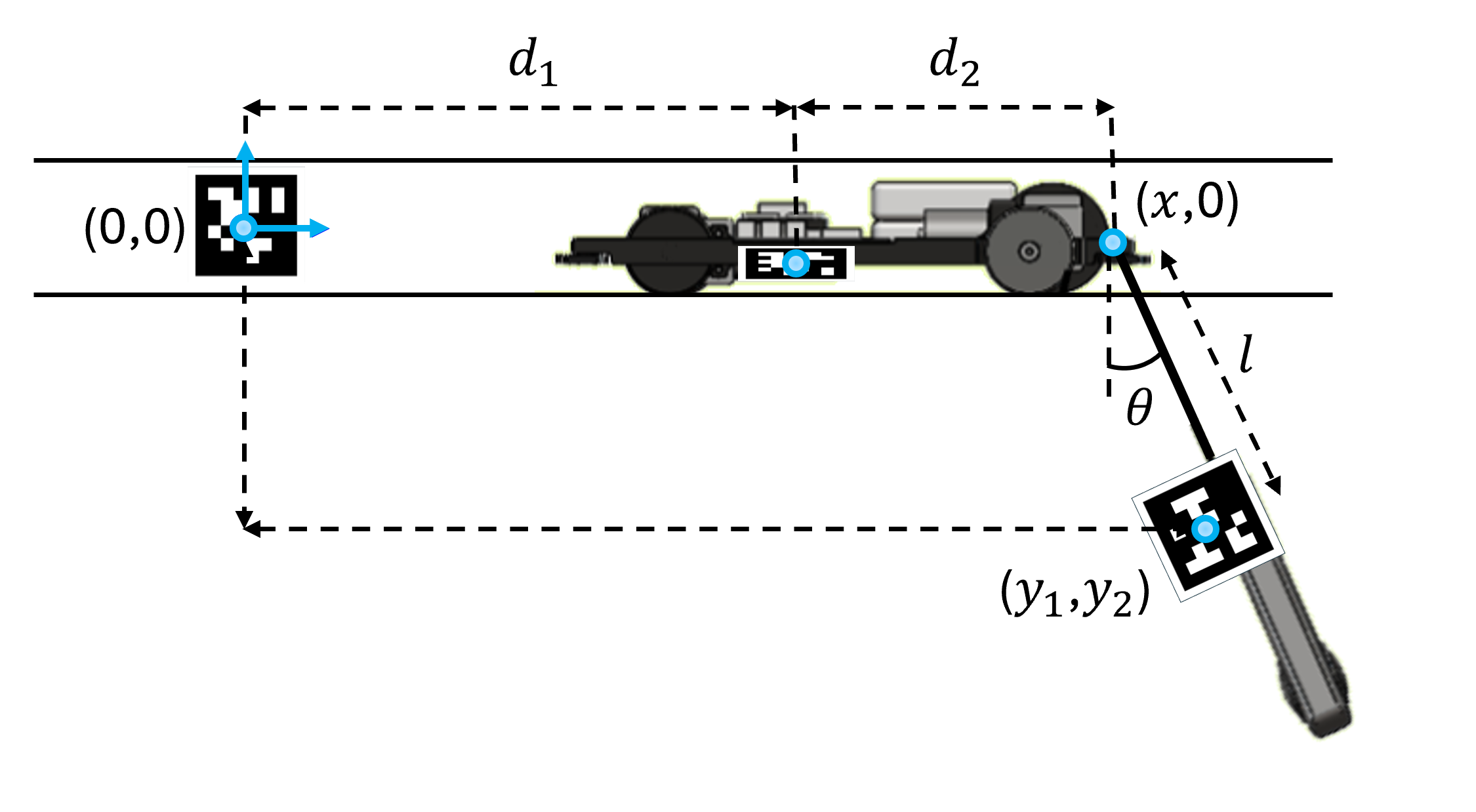}
\caption{
Fiducial marker configuration for global feedback. 
Crane robot states and outputs are computed using a combination of marker coordinate measurements and crane-robot kinematics.
}
\label{fig:apriltag_diagram}
\end{figure}

\section{Control problem formulation}\label{sec:probform}
The dynamics of the crane robot resembles those of a variable-length gantry crane, as depicted in Fig.~\ref{fig:prob_form}(a).
Under low-speed operations, it is feasible to neglect the payload swing, $\theta$, dynamics.
Without the swing dynamics, the horizontal position, $y_{1}$, of the payload (inspection camera)  
corresponds to the cart position, $x$, and the vertical camera position, $y_{2}$,  corresponds to the payload length, $l$. 
As a result, the outputs 
(the horizontal $y_{1}$ and vertical $y_{2}$ positioning of the camera) are  decoupled from each other and can be controlled independent of each other.

The decoupled approach, without compensating for the swing dynamics (i.e. assuming $x = y_{1}$ and $l = -y_{2}$), can lead to acceptable tracking during low-speed operation. To illustrate, tracking is studied for a ramp-like trajectory to move across the confined space over a pipeline obstacle, as shown in Fig. \ref{fig:prob_form}(b).
The nominal desired trajectory, 
$\tilde{Y} = \begin{bmatrix} \tilde{y}_1(\tau) & \tilde{y}_2(\tau) \end{bmatrix}^T$, is defined by 

\vspace{-0.1in} 
{\small{
\begin{equation}
    \tilde{Y} = 
    \begin{cases}
        \begin{bmatrix}
            0 & 0
        \end{bmatrix}^T & 0 \leq \tau < T_t \\
        \begin{bmatrix}
        \frac{\Delta y_1}{T_t}(\tau - T_t) & \frac{2\Delta y_2}{T_t}(\tau - T_t)
        \end{bmatrix}^T & T_t \leq \tau < \frac{3}{2}T_t \\
        \begin{bmatrix}
        \frac{\Delta y_1}{T_t}(\tau - T_t) & -\frac{2\Delta y_2}{T_t}(\tau - 2T_t) 
        \end{bmatrix}^T & \frac{3}{2}T_t \leq \tau < 2T_t \\
        \begin{bmatrix}
        \Delta y_1 & 0 
        \end{bmatrix}^T & 2T_t \leq \tau < 4T_t \\

    \end{cases} 
    ,
    \notag
\end{equation}
}}

%
%

\vspace{-0.1in} 
\noindent 
where the transition time, $T_t$, defines the duration of movement, $\Delta y_1$ defines the change in the camera's horizontal position over $T_t$, and $\Delta y_2$ defines the change in the camera's vertical position over $\frac{T_t}{2}$ before returning to the initial position within $\frac{T_t}{2}$.
The nominal time trajectories $\tilde{y}_1$ and $\tilde{y}_2$ are smoothed by four cascaded first-order low-pass filters with a cutoff frequency, $\alpha$, as $y_1(\tau)=F_f\tilde{y_1}$ and $y_2(\tau)=F_f\tilde{y_2}$, with
$   F_f = \left( \frac{\alpha}{\lambda+\alpha} \right)^4$
where $\lambda$ represents the Laplace variable.

The response of the 
crane robot with cart mass, $M=0.815$ kg, and payload mass, $m=0.225$ kg, when following the time 
trajectories $y_1(\tau)$ and $y_2(\tau)$, parameterized by $\Delta y_1$ = 0.75 m, $\Delta y_2$ = 0.15 m, and $\alpha$ = 10 $\frac{\text{rad}}{\text{s}}$ (1.59 Hz) at a slow transition time of $T_t$ of 40 seconds, by tracking the decoupled commands of the cart and payload length are shown in Fig. \ref{fig:prob_form}(c).
Residual oscillations, present after reaching the inspection point, are small and remain below a magnitude of 1.4 degrees, as shown in Fig. \ref{fig:prob_form}(d).

\begin{figure}[!ht]
\centering
\includegraphics[width=0.5\textwidth]{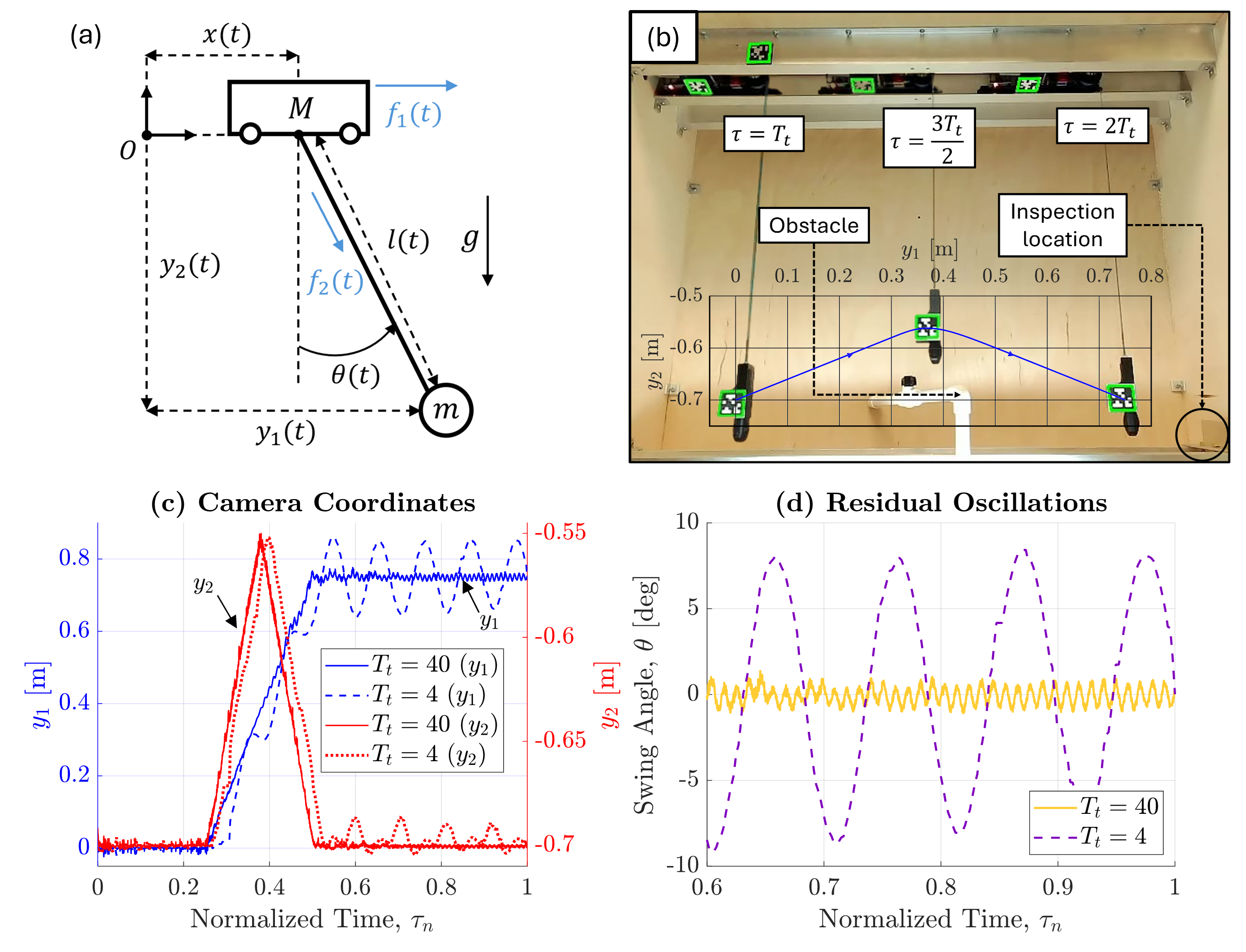}
\caption{
Comparison of residual oscillations  for slow and fast 
trajectories,  without  swing-dynamics compensation.
(a)~The crane-robot schematic. The system inputs are the force on the cart, $f_1$, and the force on the payload, $f_2$, and the system outputs are the horizontal and vertical positions of the payload, $y_1$ and $y_2$, respectively. 
The states of the system are the cart's position, $x$, the payload length, $l$, and the payload swing angle, $\theta$, along with their time derivatives.
(b)~The experimental confined space with snapshots of the crane robot prototype traversing the ramp-like trajectory which transitions the camera payload over a pipeline obstacle near an inspection location. 
(c)~Comparative time responses of tracking the decoupled commands for the horizontal camera coordinate, $y_1$, and the vertical camera coordinate, $y_2$, at two different transition times, $T_t=4$ and $T_t=40$ seconds, plotted against normalized time, $\tau_n = \frac{\tau}{4T_t}$. 
(d)~The residual oscillations upon arriving at the inspection point for the two transition times.
}
\vspace{-0.1in} 
\label{fig:prob_form}
\end{figure}

\noindent 
However, this decoupling of the payload positioning is only valid at low speeds and accelerations, and rapid changes in the positions (e.g., needed for faster movements to speed up inspection) can excite the swing dynamics, inducing significant oscillations, which in turn, can make  teleoperation challenging.
The swing dynamics are excited as the same ramp-like trajectory is tracked with a smaller transition time, $T_t=4$ seconds, with a 6.5 times increase in residual oscillation magnitude to 9.1 degrees, as shown in Fig. \ref{fig:prob_form}(c)(d).
Such large residual oscillations need to be avoided to enable fast teleoperation. 
Therefore, the research problem is to compensate for the swing dynamics to reduce residual oscillations.

\section{Flatness-based semi-autonomous control}\label{sec:methods}

\subsection{Flatness-based feedforward inputs}

The dynamics of the crane robot can be modeled as~\cite{yu}
{\small{
\begin{equation}\label{eq:sys_dyn} 
\begin{bmatrix} 
(M+m) & mlc &  ms \\
mlc & ml^2 & 0 \\ 
ms & 0 & m
\end{bmatrix} 
\begin{bmatrix} 
\ddot{x} \\ \ddot{\theta} \\ \ddot{l}
\end{bmatrix} 
 = 
\begin{bmatrix} 
 m l \dot{\theta}^2 s - 2 m \dot{l} \dot{\theta} c   +f_1 \\
  - 2 m l \dot{l} \dot{\theta} - m g l s \\
 m l \dot{\theta}^2 + m g c +f_2
\end{bmatrix}  ,
\end{equation}
}}

\noindent where $M$ is the mass of the cart, $m$ is the mass of the payload, $f_1$ is the cart force, $f_2$ is the payload force, $g$ is the gravitational acceleration, $s=\sin(\theta(t))$,  $c=\cos(\theta(t))$, and it is assumed that the rolling  friction on the cart is negligible, or has been compensated. 
Inverting the square matrix on the left hand side  of Eq.~\eqref{eq:sys_dyn}, and rewriting in the state-space form, results in 

{\small{
\begin{align}
\label{eq_sys_dyn_isolated} 
\frac{d}{dt} X
& = 
\begin{bmatrix} 
\dot{x} \\
 0 \\
 \dot{\theta} \\
 - \frac{2}{l}\dot{l}\dot{\theta} - \frac{g}{l}s \\
  \dot{l} \\
l \dot{\theta}^2 + g c \\
\end{bmatrix}  + 
\begin{bmatrix} 
0 & 0 \\
 \frac{1}{M} &   \frac{-s}{M} \\
 0 & 0 \\
  \frac{-c}{M l} &  \frac{s c}{M l}
  \\
  0 & 0 \\
 \frac{-s}{M} & \left( \frac{s^2}{M} + \frac{1}{m}\right)
\end{bmatrix} 
\begin{bmatrix} 
 f_1 \\
 f_2
\end{bmatrix},
\\
& = 
f(X) + g(X)F
\end{align} 
}}

\vspace{-0.1in}
\noindent with state vector $X = \begin{bmatrix} x & \dot{x} & \theta & \dot{\theta} & l & \dot{l} \text{\,} \end{bmatrix}^T$.
The outputs of the system are the camera's horizontal position, $y_1$, and vertical position, $y_2$, which can be 
expressed in terms of the crane-robot states (${x, l, \theta}$) as
\begin{equation}\label{eq:y}
Y~=~
\begin{bmatrix} y_1 \\ y_2 \end{bmatrix} = 
\begin{bmatrix} x+ls \\ -lc \end{bmatrix}
.
\end{equation}

\vspace{-0.05in}
\noindent 
To enable tracking of the outputs ($y_1, y_2$), an expression relating the input forces 
($f_1$ and $f_2$) to the outputs is found by  differentiating the outputs until the inputs appears~\cite{kolar2017}. Specifically, differentiating Eq.~\eqref{eq:y} twice results in
\begin{equation}\label{eq_ydot_X}
    \dot{Y}~=~
    \begin{bmatrix} \dot{y}_1 \\ \dot{y}_2 \end{bmatrix} = 
    \begin{bmatrix} \dot{x} + \dot{l}s + l\dot{\theta}c \\
    -\dot{l}c+l\dot{\theta}s    
    \end{bmatrix}
    ,
\end{equation}
\begin{equation}\label{eq_yddot_X}
    \ddot{Y}~=~
    \begin{bmatrix} \ddot{y}_1 \\ \ddot{y}_2 \end{bmatrix} = 
    \begin{bmatrix} \ddot{x} + \ddot{l}s + 2\dot{l}\dot{\theta}c + l\ddot{\theta}c - l\dot{\theta}^2s \\ -\ddot{l}c + 2\dot{l}\dot{\theta}s + l\ddot{\theta}s + l\dot{\theta}^2c \end{bmatrix}
    .
\end{equation}
The second time derivative  $\ddot{Y}$ depends on the inputs ($f_1, f_2$) since 
substituting for the second derivatives of the states from Eq.~\eqref{eq_sys_dyn_isolated} into Eq.~\eqref{eq_yddot_X} results in 
\begin{equation}\label{eq_yddot_F}
    \begin{bmatrix} \ddot{y}_1 \\ \ddot{y}_2 \end{bmatrix} = 
    \begin{bmatrix}
    0 \\ -g 
    \end{bmatrix} + 
        \begin{bmatrix} 0 & \frac{s}{m} \\ 0 & -\frac{c}{m} \end{bmatrix}
    \begin{bmatrix} f_1 \\ f_2 
    \end{bmatrix}
    ~ = 
    \begin{bmatrix}
    0 \\ -g 
    \end{bmatrix} + 
        \tilde{\beta} F
        ,
\end{equation}
where $\tilde{\beta}$ is defined as the matrix of terms preceding the input vector $F = \begin{bmatrix} f_1 & f_2 \end{bmatrix}^T$.
However, the force $F$ cannot be found from Eq.~\eqref{eq_yddot_F} since the matrix $\tilde{\beta}$  is not invertible.
Therefore, assuming that the input ${f}_2$ is sufficiently smooth, and redefining the new input to be $\dot{f}_2$ (with ${f}_2$ considered as an extended state), 
the output expression in Eq.~\eqref{eq_yddot_F} is differentiated again to obtain 
\begin{equation}\label{eq_ydddot_F}
    Y^{(3)} = 
    \begin{bmatrix} {y}^{(3)}_1 \\ {y}^{(3)}_2 \end{bmatrix} = 
    \frac{1}{m}\begin{bmatrix}  f_2\dot{\theta}c \\  f_2\dot{\theta}s \end{bmatrix}
    + \tilde{\beta}
      \begin{bmatrix} f_1 \\ \dot{f}_{2} \end{bmatrix}, 
\end{equation}
where the superscript in brackets $(i)$ indicates the $i^{th}$ time derivative, e.g., $y_k^{(i)}$ denotes $i^{th}$ time derivative of $y_k$ for $k\in\{1,2\}$.
Again, the redefined input ($f_1, \dot{f}_{2}$) cannot be found from Eq.~\eqref{eq_ydddot_F} since the matrix 
$\tilde{\beta}$ is not invertible. 
Therefore, the input is further redefined to be $\ddot{f}_2$, with (${f}_2, \dot{f}_2$) considered as extended states, and 
the output expression in Eq.~\eqref{eq_ydddot_F} is differentiated again to obtain
\begin{equation}\label{eq_y4}
    Y^{(4)} = 
    \begin{bmatrix} y_1^{(4)} \\ y_2^{(4)} \end{bmatrix} = \frac{1}{m}
    \begin{bmatrix} \ddot{f}_2s + 2\dot{f}_2\dot{\theta}c + f_2\ddot{\theta}c - f_2\dot{\theta}^2s \\ -\ddot{f}_2c + 2\dot{f}_2\dot{\theta}s + f_2\ddot{\theta}s + f_2\dot{\theta}^2c \end{bmatrix}
    .
\end{equation}

\noindent 
Substituting  for the second derivatives of the states from Eq.~\eqref{eq_sys_dyn_isolated} into Eq.~\eqref{eq_y4}, and arranging  yields

{\small{
\begin{align}\label{eq:y4_matrix}
 &\begin{bmatrix} y_1^{(4)} \\ y_2^{(4)} \end{bmatrix} ~ = 
     \frac{1}{m} 
    \begin{bmatrix} -\frac{1}{Ml}f_2c^2 & s \\ -\frac{1}{Ml}f_2cs & -c \end{bmatrix}
    \begin{bmatrix} f_1 \\ \ddot{f}_2 \end{bmatrix}
    \\
    ~~~& -\frac{1}{m} \begin{bmatrix}  - 2\dot{f}_2\dot{\theta}c + f_2\dot{\theta}^2s - \frac{1}{Ml}f_2^2sc^2 + \frac{2}{l}f_2\dot{l}\dot{\theta}c + \frac{g}{l}f_2sc \\  - 2\dot{f}_2\dot{\theta}s - f_2\dot{\theta}^2c - \frac{1}{Ml}f_2^2s^2c + \frac{2}{l}f_2\dot{l}\dot{\theta}s + \frac{g}{l}f_2s^2 \end{bmatrix}. \notag
\end{align}
}}

\noindent 
The final redefined input ($f_1, \ddot{f}_{2}$) can be found from Eq.~\eqref{eq:y4_matrix} if the matrix 
preceding the input vector $\begin{bmatrix} f_1 & \ddot{f}_2 \end{bmatrix}^T$ of
\begin{align}
\beta = 
\begin{bmatrix} -\frac{1}{Ml}f_2c^2 & s \\ -\frac{1}{Ml}f_2cs & -c \end{bmatrix}
\end{align}
is invertible. 
The determinant of $\beta$ is $\frac{f_2c}{Ml}$, making it invertible provided the un-stowed 
payload length is nonzero, $l \neq 0$, the payload swing angle does not become horizontal, $\theta \in \left(-\frac{\pi}{2}, \frac{\pi}{2}\right)$, and the payload force 
remains negative, $f_2 < 0$ to ensure that the cable remains taut. 
Given the desired outputs' fourth derivatives, $Y^{(4)}_d$, the redefined input $f_{1}$ and $\ddot{f}_{2}$, can be found by settling the right hand side of  Eq.~\eqref{eq:y4_matrix} to be 
$ \begin{bmatrix} v_1 & v_2 \end{bmatrix}^T$ 
as
\begin{align}\label{eq:F1_ff_0}
    f_{1} &= \frac{Ml}{f_2c}(-m(c v_1 + sv_2) + 2\dot{f}_2\dot{\theta} \notag \\ 
    & \qquad + \frac{1}{Ml}f_2^2sc - \frac{2}{l}f_2\dot{l}\dot{\theta} - \frac{g}{l}f_2s)
    , \\[.52em]
    \label{eq:F2_ff_0}
    \ddot{f}_{2} & = m(sv_1 - cv_2) + f_2\dot{\theta}^2
\end{align}
leading to the system 
\begin{align}
    y_1^{(4)} & =   v_1 
    \label{eq_v_1_total} \\
    y_2^{(4)} & = v_2. 
     \label{eq_v_2_total}
\end{align}

\noindent 
Here, the feedforward inputs ${f}_{1,ff}$ and $\ddot{f}_{2,ff}$ can be found from Eq.~\eqref{eq:F1_ff} and Eq.~\eqref{eq:F2_ff}, respectively, by setting $v_1=y_{1,d}^{(4)}$ and $v_2=y_{2,d}^{(4)}$
\begin{align}\label{eq:F1_ff}
    f_{1,ff} &= \frac{Ml}{f_2c}(-m(c y_{1,d}^{(4)} + sy_{2,d}^{(4)}) + 2\dot{f}_2\dot{\theta} \notag \\ 
    & \qquad + \frac{1}{Ml}f_2^2sc - \frac{2}{l}f_2\dot{l}\dot{\theta} - \frac{g}{l}f_2s)
    , \\[.52em]
    \label{eq:F2_ff}
    \ddot{f}_{2,ff} & = m(sy_{1,d}^{(4)} - cy_{2,d}^{(4)}) + f_2\dot{\theta}^2
\end{align}
and then integrating twice over time to find the feedforward input ${f}_{2,ff}$. A specified trajectory-tracking performance, i.e., a desired characteristic equation for the error dynamics, say 
$$\lambda^4 + \sum_{i=0}^{3} a_{k,i} \lambda^i =0$$
where the error is 
$$e_k = y_k - y_{k,d}, \qquad k \in \{ 1,2 \}$$ can be achieved by selecting the  controller $v_k$, $k \in \{ 1,2 \}$ in Eqs.~\eqref{eq_v_1_total} and \eqref{eq_v_2_total} as
\begin{equation}\label{eq:exact_fb}
    v_k = y_{k,d}^{(4)} 
- \sum_{i=0}^{3} a_{1,i} ( y_k^{(i)} -y_{k,d}^{(i)}), \quad k \in \{ 1,2 \}
\end{equation}
resulting in system inputs 
\begin{align}\label{eq:F1_fb_lin}
    f_1 &= \frac{Ml}{f_2 c} \Big( -m \Big( c \big( y_{1,d}^{(4)} - \sum_{i=0}^{3} a_{1,i} (y_1^{(i)} - y_{1,d}^{(i)}) \big) \notag \\
    & \quad + s \big( y_{2,d}^{(4)} - \sum_{i=0}^{3} a_{2,i} (y_2^{(i)} - y_{2,d}^{(i)}) \big) \Big) \notag \\
    & \quad + 2\dot{f}_2 \dot{\theta} + \frac{1}{Ml} f_2^2 s c - \frac{2}{l} f_2 \dot{l} \dot{\theta} - \frac{g}{l} f_2 s \Big),
\end{align}
\begin{align}\label{eq:F2_fb_lin}
    f_2 &= \iint \Big( m \Big( s \big( y_{1,d}^{(4)} - \sum_{i=0}^{3} a_{1,i} (y_1^{(i)} - y_{1,d}^{(i)}) \big) \notag \\
    & \quad - c \big( y_{2,d}^{(4)} - \sum_{i=0}^{3} a_{2,i} (y_2^{(i)} - y_{2,d}^{(i)}) \big) \Big) + f_2 \dot{\theta}^2 \Big) \, d^2t \notag \\
    & \quad + \dot{f}_2(0)t + f_2(0),
\end{align}
where $f_2(0)$ and $\dot{f}_2(0)$ are initial conditions.
\begin{remark}
    Starting from rest, the initial conditions for Eq.~\eqref{eq:F2_fb_lin} are $f_2(0)=-mg$ and $\dot{f}_2(0)=0$.
\end{remark}

\subsection{Limited state feedback}\label{sec:fbk}

To avoid taking time derivatives of potentially noisy output measurements, the following provides  a state feedback  $F_{fb} = \begin{bmatrix} f_{1,fb} & f_{2,fb} \end{bmatrix}^T$,
that achieves stable trajectory tracking  without these high-order time derivatives  and without full state feedback (i.e., swing angular velocity $\dot{\theta}$), 
provided the tracked trajectories are sufficiently slow, resulting in a small swing angle.

\begin{lemma}
The feedback law 
\begin{equation}\label{eq:feedback}
        F_{fb} = -KX = 
        -
        \begin{bmatrix} 
            k_1 & k_2 & k_3 & k_4 & 0 & 0 \\
            0 & 0 & 0 & 0 & k_5 & k_6
        \end{bmatrix}
        X
        ,
    \end{equation} 
    stabilizes the system in Eq. \eqref{eq_sys_dyn_isolated}
about the equilibrium state, $X_{eq} = \begin{bmatrix} x_0 & 0 & 0 & 0 & l_0 & 0 \end{bmatrix}^T$, and corresponding equilibrium input, $ F_{eq} = \begin{bmatrix} 0 & -mg \end{bmatrix}^T$, 
at any  given cart position $x_0$ and positive payload length $l_0$, 
provided
    \begin{align}
  &   k_1  > 0 , \text{ } k_2l_0-k4  > 0 , \notag \\
  &   \frac{-k_4Mg}{k_2l_0-k_4} + k_1l_0 - k_3 + mg  > 0 , \notag\\
      &   \frac{-k_2k_4Mg + (k_2l_0-k_4)(k_1k_4-k_2k_3+k_2mg)}{-k_4Mg+(k_2l_0-k_4)(k_1l_0-k_3+mg)} > 0 , \notag \\
    & k_5 > 0 , \text{ and } k_6 > 0  
    \label{stability_conditions}
    .
    \end{align}

\end{lemma}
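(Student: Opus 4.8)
The plan is to establish local asymptotic stability by Lyapunov's indirect method: linearize the closed-loop dynamics about $(X_{eq},F_{eq})$ in the deviation coordinates $\delta X = X - X_{eq}$ (the equilibrium input $F_{eq}$ being supplied separately), and show that the inequalities in \eqref{stability_conditions} are precisely the Routh--Hurwitz conditions for the closed-loop characteristic polynomial.

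First I would check that $(X_{eq},F_{eq})$ is an equilibrium of \eqref{eq_sys_dyn_isolated}: setting $\theta=0$, $\dot x=\dot\theta=\dot l=0$, $l=l_0$, $f_1=0$, $f_2=-mg$ gives $s=0$, $c=1$, so the drift and input contributions in the $\ddot x$, $\ddot\theta$, $\ddot l$ rows each vanish (e.g.\ $l\dot\theta^2+gc+(s^2/M+1/m)f_2=g-g=0$). Next I would compute the Jacobians $A=\partial_X(f+gF)\big|_{eq}$ and $B=g(X_{eq})$. The crucial structural observation is that, because $s=0$ and $\dot\theta=0$ at the equilibrium, both $A$ and $B$ are block-diagonal with respect to the splitting $(\delta x,\delta\dot x,\delta\theta,\delta\dot\theta)$ versus $(\delta l,\delta\dot l)$: the cart--swing block is actuated only by $f_1$ (the $f_1$ column of $B$ has a zero in the $\dot l$ row), the length block only by $f_2$, and the off-diagonal Jacobian entries linking the two blocks all evaluate to zero. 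Since the sparse gain $K$ respects the same splitting, $A-BK$ is block-diagonal, so its spectrum is the union of the spectra of the two closed-loop sub-blocks and it suffices to stabilize each separately. I would record the nonzero entries: in the cart--swing block the only off-diagonal Jacobian terms are $\partial\ddot x/\partial\theta=mg/M$ and $\partial\ddot\theta/\partial\theta=-(M+m)g/(Ml_0)$ (besides the two integrator rows), with input coefficients $1/M$ and $-1/(Ml_0)$ on $f_1$; the length block is a double integrator with input coefficient $1/m$ on $f_2$.

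For the length block the closed loop is a double integrator with PD feedback $-(k_5\,\delta l+k_6\,\delta\dot l)$, so its characteristic polynomial is $\lambda^2+(k_6/m)\lambda+(k_5/m)$; Routh--Hurwitz (positivity of all coefficients of a monic quadratic) gives exactly $k_5>0$ and $k_6>0$, the last two conditions. For the cart--swing block I would form $A_1-B_1K_1$ with $K_1=[k_1,k_2,k_3,k_4]$ and expand its characteristic polynomial; after the lengthy but mechanical simplification it becomes $\lambda^4+c_3\lambda^3+c_2\lambda^2+c_1\lambda+c_0$ with $c_3=(k_2l_0-k_4)/(Ml_0)$, $c_2=(k_1l_0-k_3+(M+m)g)/(Ml_0)$, $c_1=k_2g/(Ml_0)$, $c_0=k_1g/(Ml_0)$. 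I expect the collapse of $c_1$ and $c_0$, in which all dependence on $k_3$ and $k_4$ must cancel, to be the step most prone to slips, so I would cross-check it by separately evaluating $p(0)$ and reading off the $\lambda^3$ coefficient. Then I would write the Routh array for this quartic; its first-column entries are $1$, $c_3$, $(c_3c_2-c_1)/c_3$, $\big((c_3c_2-c_1)c_1-c_3^2c_0\big)/(c_3c_2-c_1)$, and $c_0$. Positivity of $c_0$ is $k_1>0$; positivity of $c_3$ is $k_2l_0-k_4>0$; granting $c_3>0$, positivity of the third entry is equivalent, after multiplying by $Ml_0$ and using $gM-k_2gMl_0/(k_2l_0-k_4)=-k_4gM/(k_2l_0-k_4)$, to $\tfrac{-k_4Mg}{k_2l_0-k_4}+k_1l_0-k_3+mg>0$; and, granting the earlier conditions so that this denominator is positive, positivity of the fourth entry reduces to $\tfrac{-k_2k_4Mg+(k_2l_0-k_4)(k_1k_4-k_2k_3+k_2mg)}{-k_4Mg+(k_2l_0-k_4)(k_1l_0-k_3+mg)}>0$. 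Combining the two blocks, $A-BK$ is Hurwitz if and only if \eqref{stability_conditions} holds, and Lyapunov's indirect method then yields local asymptotic stability of $X_{eq}$, consistent with the ``sufficiently slow / small swing angle'' caveat in the statement.

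The main obstacle is entirely the algebraic bookkeeping in the quartic: obtaining the four characteristic-polynomial coefficients correctly and then recognizing that the successive Routh pivots, once the common positive factor $Ml_0$ and the factor $(k_2l_0-k_4)$ are cleared, reproduce exactly the rational inequalities displayed in the lemma. The conceptual ingredients — the equilibrium check, the block decoupling of the linearization, Routh--Hurwitz, and the indirect Lyapunov theorem — are otherwise routine.
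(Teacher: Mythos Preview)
Your proposal is correct and follows essentially the same route as the paper: linearize the closed-loop system about $(X_{eq},F_{eq})$, observe that the resulting matrix is block-diagonal in the $(x,\dot x,\theta,\dot\theta)$ and $(l,\dot l)$ coordinates, and apply the Routh--Hurwitz criterion to each block to recover exactly the inequalities in \eqref{stability_conditions}. The only cosmetic differences are that the paper works with the non-monic quartic $Ml_0\lambda^4+\cdots$ rather than your monic version, and you are a bit more explicit than the paper in verifying the equilibrium, justifying the block decoupling, and invoking Lyapunov's indirect method.
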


\begin{proof}
Linearization of the system model in Eq. (\ref{eq_sys_dyn_isolated}) about the equilibrium in the lemma results in 
    \begin{equation}\label{eq:linearized_form}
        \dot{X} = AX + BF,
    \end{equation}
where 
applying the input as the feedback $F=F_{fb} $ in Eq.~\eqref{eq:feedback} to Eq. (\ref{eq:linearized_form}) results in the closed-loop dynamics 
    \begin{align}\label{eq:linearized_feedback}
        \dot{X} &=
        \begin{bmatrix}
            \begin{array}{cccc|cc}
            0 & 1 & 0 & 0 & 0 & 0 \\
            -\frac{k_1}{M} & -\frac{k_2}{M} & \frac{mg-k_3}{M} & -\frac{k_4}{M} & 0 & 0 \\
            0 & 0 & 0 & 1 & 0 & 0 \\
            \frac{k_1}{Ml_0} & \frac{k_2}{Ml_0} & \frac{k_3-(M+m)g}{Ml_0} & \frac{k_4}{Ml_0} & 0 & 0 \\
            \hline
            0 & 0 & 0 & 0 & 0 & 1 \\
            0 & 0 & 0 & 0 & -\frac{k_5}{m} & -\frac{k_6}{m} 
            \end{array}
        \end{bmatrix}
        X \notag \\ 
        &=
        \begin{bmatrix}
            A_{1} & 0_{4x2} \\
            0_{2x4} & A_{2}
        \end{bmatrix}
        X.
    \end{align}
    Conditions on the gains in $K$ for stability can be derived from the blocks, $A_1$ and $A_2$, separately.
    The characteristic equation of $A_1$ is given by
    \begin{align}
        &Ml_0\lambda^4 + (k_2l_0-k_4)\lambda^3 + (k_1l_0 - k_3 + (M+m)g)\lambda^2 \notag \\
        &+ k_2g\lambda + k_1g = 0\label{eq:characterisric}
        ,
    \end{align}
    which can be used to construct the Routh array
    \begin{equation}
        \begin{array}{c|ccc}
        \lambda ^4 & Ml_0 & k_1l_0-k_3+(M+m)g & k_1g \\
        \lambda ^3 & k_2l_0-k_4 & k_2g & 0 \\
        \lambda  ^2 & c_1 & k_1g & 0 \\
        \lambda  ^1 & c_2 & 0 & 0 \\
        \lambda ^0 & k_1g & 0 & 0 \\
        \end{array}   
    \end{equation}
    where

    \vspace{-0.1in} 
    \begin{align}
        c_1 
        & 
        = \frac{-k_2Mgl_0 + (k_2l_0-k_4)(k_1l_0-k_3+(M+m)g)}{k_2l_0-k_4}
        \notag\\
        & = \frac{-k_4Mg}{k_2l_0-k_4} + k_1l_0 - k_3 + mg, 
        \notag
             \end{align}   \begin{align}
        c_2 
        & 
        = \frac{-k_1k_2l_0 + k_1k_4 -\frac{k_2k_4Mg}{k_2l_0-k_4}+k_1k_2l_0-k_2k_3+k_2mg}{\frac{-k_4Mg}{k_2l_0-k_4} + k_1l_0 - k_3 + mg}g
        \notag\\
        & = \frac{-k_2k_4Mg + (k_2l_0-k_4)(k_1k_4-k_2k_3+k_2mg)}{-k_4Mg+(k_2l_0-k_4)(k_1l_0-k_3+mg)}g \notag
        .
    \end{align}
    Routh-Hurwitz criteria ensures stability if all terms in the first column of the Routh array have no sign changes, or are positive since the first term $ Ml_0$ is positive.
     The characteristic equation for $A_2$ is given by
        $m\lambda ^2 + k_6 \lambda + k_5 = 0$,
    which results in stability  provided $k_5 > 0$ and $k_6 > 0$. The lemma follows. 
\end{proof}

\begin{remark}
    With full state feedback, poles of $A_1$ can be placed by specifying gains in Eq.~\eqref{eq:characterisric}.
\end{remark}

\begin{corollary}
    The system in Eq. (\ref{eq:linearized_feedback}) can be stabilized without feedback from the swing dynamics (i.e., with $k_3=k_4=0$) by  selecting positive gains $k_1>0$, $k_2>0$, $k_5 > 0$, and $k_6 > 0$.
\end{corollary}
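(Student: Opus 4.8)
The plan is to obtain the Corollary as the special case $k_3 = k_4 = 0$ of the Lemma, so that the whole task reduces to checking that the five inequalities in Eq.~\eqref{stability_conditions} are automatically satisfied once $k_1, k_2, k_5, k_6 > 0$ and the standing assumptions $l_0 > 0$, $M, m, g > 0$ hold. First I would substitute $k_3 = k_4 = 0$ into each condition and simplify: the first stays $k_1 > 0$; the second becomes $k_2 l_0 > 0$; the third collapses to $k_1 l_0 + m g > 0$; and the fourth, after the common factor $k_2 l_0$ cancels from numerator and denominator, becomes $k_2 m g/(k_1 l_0 + m g) > 0$ --- each manifestly positive. The last two, $k_5 > 0$ and $k_6 > 0$, are assumptions. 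Hence every hypothesis of the Lemma is met and stability follows.

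As an independent check (and a self-contained alternative that does not route through the Lemma's general formulas), I would instead set $k_3 = k_4 = 0$ directly in the characteristic polynomial Eq.~\eqref{eq:characterisric}, giving $M l_0 \lambda^4 + k_2 l_0 \lambda^3 + (k_1 l_0 + (M+m)g)\lambda^2 + k_2 g \lambda + k_1 g$ with all coefficients strictly positive, and then run the Routh test: the $\lambda^2$ pivot reduces to $c_1 = k_1 l_0 + m g$ and the $\lambda^1$ pivot to $c_2 = k_2 m g^2/(k_1 l_0 + m g)$, both positive, with final entry $k_1 g > 0$, so the block $A_1$ is Hurwitz. The block $A_2$ is untouched and is Hurwitz for $k_5, k_6 > 0$ exactly as in the Lemma's proof, which completes the argument.

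I do not expect a genuine obstacle here; the only point needing a little care is verifying that the fourth (visually complicated) condition and the $c_2$ entry really do telescope down to the clean expressions above when $k_3 = k_4 = 0$, and in particular that the denominator $k_1 l_0 + m g$ stays strictly positive so that no division by zero or spurious sign flip sneaks in. Once that is confirmed, the Corollary is immediate.
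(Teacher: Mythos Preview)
Your proposal is correct and matches the paper's own proof: the paper likewise substitutes $k_3=k_4=0$ into the Lemma's conditions \eqref{stability_conditions}, obtaining $k_1>0$, $k_2 l_0>0$, $k_1 l_0+mg>0$, and $\frac{k_2}{k_1 l_0+mg}>0$ (your equivalent form with the extra positive factor $mg$ in the numerator is fine), all of which hold for $k_1,k_2>0$. Your second, direct Routh-array verification is a sound independent check but is not needed for the paper's argument.
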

\begin{proof}
    By considering $k_3=k_4=0$, the conditions for the stability of $A_1$ in Eq.~\eqref{stability_conditions} simplify to
    $ k_1 > 0, \text{ } k_2l_0 > 0, \text{ } k_1l_0+mg > 0, \text{ } \frac{k_2}{k_1l_0+mg} > 0$, 
which are satisfied for $k_1>0$, and $k_2>0$.
    %
\end{proof}

\begin{remark}
    With $k_3=k_4=0$, stabilizing feedback is achievable without payload angle measurements, which can ease implementation by only requiring sensors to measure the cart position and pendulum length, e.g., using  encoders  placed on the motors.
\end{remark}

\begin{corollary}
\label{corollary_with_swing_angle} 
    The system in Eq. (\ref{eq:linearized_feedback}) is stable by selecting gains as $k_1>0$, $k_2>0$,  $k_5 > 0$,  $k_6 > 0$  and with the swing dynamics gains as $k_3<mg$,  $k_4 = 0$.
\end{corollary}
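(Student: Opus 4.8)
The plan is to show that the gain choices stated in the corollary are a special case of the hypotheses of Lemma~1, so that stability is inherited directly. Lemma~1 has already reduced stability of the closed loop in Eq.~\eqref{eq:linearized_feedback} to its block-diagonal structure with blocks $A_1$ and $A_2$, together with the five inequalities in Eq.~\eqref{stability_conditions}: the two conditions $k_5>0$ and $k_6>0$ handle the decoupled payload-length block $A_2$, while the remaining four handle $A_1$ through its Routh array. Hence it suffices to substitute $k_4=0$ into those inequalities and verify each one under the standing assumptions $k_1>0$, $k_2>0$, $k_3<mg$, and the positive payload length $l_0>0$.

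With $k_4=0$, the first two $A_1$-conditions become $k_1>0$ and $k_2 l_0>0$, which hold immediately, and the third condition $\frac{-k_4Mg}{k_2l_0-k_4}+k_1l_0-k_3+mg>0$ collapses to $k_1l_0+(mg-k_3)>0$, a sum of strictly positive terms since $l_0>0$ and $k_3<mg$. The only item needing a brief computation is the fourth condition: setting $k_4=0$, its numerator reduces to $k_2^2 l_0(mg-k_3)$ and its denominator to $k_2 l_0(k_1l_0-k_3+mg)$, so the ratio equals $\frac{k_2(mg-k_3)}{k_1l_0+mg-k_3}$, which is positive because the numerator is a product of positive quantities and the denominator is exactly the already-verified third condition. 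Finally, $k_5>0$ and $k_6>0$ are assumed outright, so every hypothesis of Lemma~1 is met and the corollary follows.

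I do not expect a genuine obstacle here, since the argument is entirely the specialization $k_4=0$ of Lemma~1. The one thing worth emphasizing --- and the single place where the bound $k_3<mg$ is actually used --- is that, once the $\dot\theta$ feedback is dropped ($k_4=0$), every entry of the Routh array for $A_1$ stays positive for \emph{any} proportional swing-angle gain $k_3$ strictly below $mg$; this is what lets the corollary relax the choice $k_3=0$ made in the preceding corollary while keeping the implementation free of a $\dot\theta$ measurement.
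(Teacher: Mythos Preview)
Your proposal is correct and follows essentially the same approach as the paper: specialize Lemma~1 by setting $k_4=0$, reduce the four $A_1$-conditions in Eq.~\eqref{stability_conditions} to $k_1>0$, $k_2l_0>0$, $k_1l_0+(mg-k_3)>0$, and $\frac{k_2(mg-k_3)}{k_1l_0+(mg-k_3)}>0$, and observe that these all hold under $k_1,k_2>0$ and $k_3<mg$. Your explicit computation of the numerator and denominator in the fourth condition is a bit more detailed than the paper's terse statement, but the argument is identical.
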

\begin{proof}
    By considering $k_4=0$, the conditions for the stability of $A_1$ in Eq.~\eqref{stability_conditions} simplify to
 $          k_1 > 0, \text{ } k_2l_0 > 0, \text{ } k_1l_0 + (mg-k_3) > 0, 
          \frac{k_2(mg-k_3)}{k_1l_0 +  (mg-k_3)} > 0$,
              which are all satisfied for $k_1>0$, $k_2>0$, and $k_3<mg$.
\end{proof}

\begin{remark}
    While not required for stability, the addition of swing-angle feedback (i.e., $k_3<mg$ in Corollary~\ref{corollary_with_swing_angle}) can damp undesired oscillations faster than the case without swing-angle feedback (i.e., $k_3 =0$). The swing-angle feedback requires the use of an  external camera feedback to measure the swing angle $\theta$. 
\end{remark}

The desired state and feedforward satisfies the system dynamics, $$\dot{X}_d = f(X_d) + g(X_d)F_{ff}$$ in  Eq.~\eqref{eq_sys_dyn_isolated}, i.e.,
{\small{
\begin{equation}\label{eq_sys_dyn_desired} 
\frac{d}{dt} X_d
 = 
\begin{bmatrix} 
\dot{x_d} \\
 0 \\
 \dot{\theta}_d \\
 - \frac{2}{l_d}\dot{l}_d\dot{\theta}_d - \frac{g}{l_d}s_d \\
  \dot{l}_d \\
l_d \dot{\theta}_d^2 + g c_d \\
\end{bmatrix}  + 
\begin{bmatrix} 
0 & 0 \\
 \frac{1}{M} &   \frac{-s_d}{M} \\
 0 & 0 \\
  \frac{-c_d}{M l_d} &  \frac{s_d c_d}{M l_d}
  \\
  0 & 0 \\
 \frac{-s_d}{M} & \left( \frac{s_d^2}{M} + \frac{1}{m}\right)
\end{bmatrix} 
\begin{bmatrix} 
 f_{1,ff} \\
 f_{2,ff} 
\end{bmatrix},
\end{equation}
}} 

\noindent 
where $s_d = sin(\theta_d)$ and $c_d = cos(\theta_d)$ and
the corresponding desired system states $X_d$ can be computed in terms of the desired outputs algebraically~\cite{kolar2013}; specifically,
\begin{align} \label{eq:x_d}
    x_d & = y_{1,d} - \frac{\ddot{y}_{1,d}y_{2,d}}{\ddot{y}_{2,d}+g}
    ,
\\
\label{eq:xd_d}
    \dot{x}_d & = \dot{y}_{1,d} - \frac{y_{1,d}^{(3)}y_{2,d}+\ddot{y}_{1,d}\dot{y}_{2,d}}{\ddot{y}_{2,d}+g} - \frac{\ddot{y}_{1,d}y_{2,d}y_{2,d}^{(3)}}{(\ddot{y}_{2,d}+g)^2}
    ,
\\
\label{l_d}
    l_d & = \left(\left(\frac{\ddot{y}_{1,d}y_{2,d}}{\ddot{y}_{2,d}+g}\right)^2 + y_{2,d}^2\right)^{\frac{1}{2}}
    ,
\\
\label{eq:ld_d}
    \dot{l}_d 
    &= 
    \left(\left(\frac{\ddot{y}_{1,d}y_{2,d}}{\ddot{y}_{2,d}+g}\right)^2 + y_{2,d}
    ^2\right)^{-\frac{1}{2}} 
    \Biggl( y_{2,d}\dot{y}_{2,d} + \left( \frac{\ddot{y}_{1,d}y_{2,d}}{\ddot{y}_{2,d}+g} \right) 
    \notag \\
    & \qquad   \times 
    \left( \frac{y_{1,d}^{(3)}y_{2,d}+\ddot{y}_{1,d}\dot{y}_{2,d}}{\ddot{y}_{2,d}+g} - \frac{\ddot{y}_{1,d}y_{2,d}y_{2,d}^{(3)}}{(\ddot{y}_{2,d}+g)^2} \right) \Biggr)
    ,
\\
\label{eq:theta_d}
    \theta_d & = \tan^{-1} \left( \frac{-\ddot{y}_1}{\ddot{y}_2+g} \right)
    .
\\
\label{eq:thetad_d}
    \dot{\theta}_d &= \frac{-y_{1,d}^{(3)}\left( \ddot{y}_{2,d}+g \right) + \ddot{y}_{1,d} y_{2,d}^{(3)}}{(\ddot{y}_{2,d}+g)^2 + \ddot{y}_{1,d}^2}
\end{align}
Eqs.~\eqref{eq:x_d} - \eqref{eq:thetad_d} impose the condition of $\ddot{y}_{2,d} > -g$ to maintain positive cable tension and prevent slackening.

\begin{remark}
\label{rem_trajectory_stability}
The crane robot will follow a four times differentiable desired output trajectory $Y_d$ by applying the feedforward input $F_{ff}$ in Eq.~\eqref{eq_sys_dyn_desired}. Feedback is added to stabilize the desired trajectory in response to perturbations.
\end{remark}

\begin{lemma}
\label{lemma_slow}
The swing dynamics 
($\theta_d$ and $\dot{\theta}_d$), length variation velocity ($\dot{l}_d$), 
and change of feedforward forces from equilibrium values ($F_{ff}-F_{eq}$)
can be made arbitrarily small 
for sufficiently-slowly-varying desired output trajectories $Y_d$ (i.e., for sufficiently-small time derivatives 
$Y_d^{(i)}$ for $i = \{1, \ldots, 4\}$) such that  
\begin{eqnarray} 
\lim_{\max\limits_t \|Y_d^{(i)}(t)\| \to 0 } 
\left( 
\max_t | \theta_d (t) | 
\right) & = 0
\\
\lim_{\max\limits_t \|Y_d^{(i)}(t)\| \to 0 } 
\left( 
\max_t  | 
 \dot{\theta}_d (t)|  \right)& = 0 \\
\lim_{\max\limits_t \|Y_d^{(i)}(t)\| \to 0 } 
\left( 
\max_t | 
 \dot{l}_d(t)|  \right)& = 0 \\
\lim_{\max\limits_t \|Y_d^{(i)}(t)\| \to 0 } 
\left( 
\max_t \| 
F_{ff}(t) - F_{eq}\| \right) & = 0 .
\end{eqnarray}
\end{lemma}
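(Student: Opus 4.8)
The plan is to read every limit directly off the closed-form expressions for the desired states in Eqs.~\eqref{eq:x_d}--\eqref{eq:thetad_d} together with a pointwise-in-time algebraic formula for the feedforward forces; the only genuinely delicate point is to keep all estimates uniform in $t$. Throughout I would set $\epsilon := \max_{i\in\{1,\dots,4\}}\max_t\|Y_d^{(i)}(t)\|$ and use as standing hypotheses that $Y_d$ itself ranges over a fixed compact region of the confined space with $l_{\min}\le|y_{2,d}(t)|\le l_{\max}$ (the payload is deployed, so $l_d$ cannot shrink to zero) and that the positive-cable-tension condition $\ddot y_{2,d}+g>0$ already imposed after Eq.~\eqref{eq:thetad_d} holds uniformly once $\epsilon$ is small. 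Under these hypotheses none of the denominators appearing below ($\ddot y_{2,d}+g$, $\cos\theta_d$, $l_d$, $f_{2,ff}$) degenerates, uniformly in $t$ and over the family of trajectories.

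First I would bound $\theta_d$ and $\dot\theta_d$. In Eq.~\eqref{eq:theta_d} the argument of $\arctan$ has numerator of size $O(\epsilon)$ over a denominator tending to $g>0$, so $\max_t|\theta_d|=O(\epsilon)$; in Eq.~\eqref{eq:thetad_d} each term in the numerator carries a factor $y_{1,d}^{(3)}$ or $\ddot y_{1,d}$, so the numerator is $O(\epsilon)$ while the denominator tends to $g^2$, giving $\max_t|\dot\theta_d|=O(\epsilon)$. The same factor-counting applied to Eq.~\eqref{eq:ld_d} gives $\dot l_d=O(\epsilon)$: its leading factor $\big((\ddot y_{1,d}y_{2,d}/(\ddot y_{2,d}+g))^2+y_{2,d}^2\big)^{-1/2}$ is bounded by $l_{\min}^{-1}$, and every term of the remaining bracket carries one of $\dot y_{2,d}$, $\ddot y_{1,d}$, $y_{1,d}^{(3)}$, $y_{2,d}^{(3)}$.

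For the feedforward forces I would avoid the double time-integration in Eq.~\eqref{eq:F2_fb_lin} and instead use the pointwise identity coming from the second component of Eq.~\eqref{eq_yddot_F}, namely $f_{2,ff}=-m(\ddot y_{2,d}+g)/\cos\theta_d$, which is consistent with that integration and the stated initial conditions (at rest $\ddot y_{2,d}=\theta_d=\dot\theta_d=0$). Then $f_{2,ff}+mg=-m\ddot y_{2,d}/\cos\theta_d-mg(1/\cos\theta_d-1)=O(\epsilon)$ using $\theta_d=O(\epsilon)$; differentiating, $\dot f_{2,ff}=-m y_{2,d}^{(3)}/\cos\theta_d-m(\ddot y_{2,d}+g)\tan\theta_d\,\dot\theta_d/\cos\theta_d=O(\epsilon)$. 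Substituting $\dot f_{2,ff}=O(\epsilon)$, $\sin\theta_d=O(\epsilon)$, $\cos\theta_d=1+O(\epsilon^2)$, $l_d\in[l_{\min},l_{\max}]$, $\dot l_d=O(\epsilon)$, $\dot\theta_d=O(\epsilon)$, $f_{2,ff}=-mg+O(\epsilon)$, and $Y_d^{(4)}=O(\epsilon)$ into Eq.~\eqref{eq:F1_ff} shows every additive term inside the parentheses is $O(\epsilon)$ while the prefactor $Ml_d/(f_{2,ff}\cos\theta_d)$ stays bounded; hence $f_{1,ff}=O(\epsilon)$, and with $f_{2,ff}\to-mg$ this gives $\|F_{ff}-F_{eq}\|=O(\epsilon)$. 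All four limits then follow by letting $\epsilon\to0$.

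The step I expect to be the real obstacle is not any single bound but making all of them uniform in $t$ over the whole family, which is what forces the two standing hypotheses above to be stated explicitly: without a compact range for $Y_d$ bounded away from $y_{2,d}=0$, and without uniform positivity of $\ddot y_{2,d}+g$, the denominators $l_d$, $\cos\theta_d$, $f_{2,ff}$ could blow up the estimates even while all $Y_d^{(i)}$ are small. Replacing the integral form of $f_{2,ff}$ by its algebraic form sidesteps a related trap: a double integral of $\ddot f_{2,ff}$ over a horizon that may grow as the trajectory is slowed need not be $O(\epsilon)$, whereas the algebraic identity controls $f_{2,ff}$ at every instant regardless of the horizon.
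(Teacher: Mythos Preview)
Your argument is correct and follows essentially the same route as the paper's: read $\theta_d,\dot\theta_d,\dot l_d\to 0$ directly off Eqs.~\eqref{eq:ld_d}--\eqref{eq:thetad_d}, then conclude $F_{ff}\to F_{eq}$. The one place you diverge is in handling $f_{2,ff}$: the paper argues that Eq.~\eqref{eq:F2_ff} gives $\ddot f_{2,ff}\to 0$, hence (with equilibrium initial conditions) $f_{2,ff}\to -mg$, whereas you extract the pointwise identity $f_{2,ff}=-m(\ddot y_{2,d}+g)/\cos\theta_d$ from the second row of Eq.~\eqref{eq_yddot_F} and bound it instantaneously. Your variant is the more careful of the two---as you correctly observe, the paper's double-integration step glosses over the growing-horizon issue---and you are also more explicit than the paper about the standing hypotheses (a compact range for $y_{2,d}$ bounded away from zero, uniform positivity of $\ddot y_{2,d}+g$) needed to make the estimates uniform in $t$.
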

\begin{proof}
Assume the crane robot is commanded a desired output trajectory $Y_d$ from equilibrium (i.e., $F = F_{eq}$).
By Eqs.~\eqref{eq:ld_d}-\eqref{eq:thetad_d}, $\dot{l}_d \rightarrow 0$, $\theta_d \rightarrow 0$, and $\dot{\theta}_d \rightarrow 0$ as $Y_d^{(i)} \rightarrow 0$.
$\theta_d \rightarrow 0 \implies c_d \rightarrow 1$ and $s_d \rightarrow 0$.
Therefore, by Eq.~\eqref{eq:F1_ff}, $f_{1,ff}  \rightarrow f_{1,eq} \rightarrow 0$ and by Eq. \eqref{eq:F2_ff}, $\ddot{f}_{2,ff} \rightarrow 0 \implies f_{2,ff} \rightarrow f_{2,eq} = -mg$.
The lemma follows.
\end{proof}

\begin{lemma}
\label{lemma_stabilization_time_varying_trajectory}
   The origin of the error $ E = X- X_d$  dynamics
    with the feedforward input $F_{ff}$ augmented with feedback $F_{fb}$
    \begin{equation} 
    F 
     = \begin{bmatrix}
        f_{1,ff} \\ f_{2,ff}
    \end{bmatrix} 
    + \begin{bmatrix}
        f_{1,fb} \\ f_{2,fb}
    \end{bmatrix} ~~
    =F_{ff} + F_{fb} 
    \end{equation} 
    is stable provided the 
    desired output trajectories $Y_d$ are sufficiently slowly-varying, i.e., the time derivatives $Y_d^{(i)}$ for $i = \{1, \ldots, 4\}$ are sufficiently small,  
    with sufficiently small deviations in payload length, i.e., 
    $\max\limits_t(|l_d(t)-l_0|)$ is sufficiently small.
\end{lemma}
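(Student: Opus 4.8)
The plan is to recognize the closed-loop error dynamics as a vanishing perturbation of the Hurwitz linear system $\dot E=(A-BK)E$ provided by Lemma~1, and then to close the argument with a converse-Lyapunov robustness estimate. Write $\Phi(X,F)=f(X)+g(X)F$ for the right-hand side of Eq.~\eqref{eq_sys_dyn_isolated}, so that $\dot X=\Phi(X,F_{ff}+F_{fb})$ while $\dot X_d=\Phi(X_d,F_{ff})$ by Eq.~\eqref{eq_sys_dyn_desired}, and the feedback regulates the error, $F_{fb}=-KE$ with $K$ as in Eq.~\eqref{eq:feedback}. Subtracting these, using $\Phi(X_{eq},F_{eq})=0$ and the open-loop linearization $\dot X=AX+BF$ of Eq.~\eqref{eq:linearized_form} about $(X_{eq},F_{eq})$, I would write
\[
\dot E ~=~ \Phi(X_d+E,\,F_{ff}+F_{fb})-\Phi(X_d,F_{ff}) ~=~ (A-BK)E+R(t,E),
\]
where, for gains satisfying Lemma~1 (or one of its corollaries), $A-BK$ equals the block-diagonal Hurwitz matrix $\mathrm{blkdiag}(A_1,A_2)$ of Eq.~\eqref{eq:linearized_feedback}, and $R$ is the remainder.

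The second step is to show $R$ is a genuine vanishing perturbation with bounds uniform in $t$. Using the integral form of the remainder,
\[
R(t,E)=\int_0^1\big(D\Phi|_{(X_d+sE,\,F_{ff}+sF_{fb})}-D\Phi|_{(X_{eq},F_{eq})}\big)\cdot(E,F_{fb})\,ds,
\]
the smoothness of $\Phi$ on the region where $l$ is bounded away from zero, $\theta\in(-\tfrac{\pi}{2},\tfrac{\pi}{2})$, and $f_2<0$ (the invertibility region identified after Eq.~\eqref{eq:y4_matrix}, on which the feedforward of Eqs.~\eqref{eq:F1_ff}--\eqref{eq:F2_ff} is defined), together with Lipschitz continuity of $D\Phi$ there, give
\[
\|R(t,E)\|\le L\big(\|X_d(t)-X_{eq}\|+\|F_{ff}(t)-F_{eq}\|+(1+\|K\|)\|E\|\big)(1+\|K\|)\|E\|.
\]
By Lemma~\ref{lemma_slow}, propagated through the algebraic flatness maps of Eqs.~\eqref{eq:x_d}--\eqref{eq:thetad_d}, $\|X_d(t)-X_{eq}\|$ and $\|F_{ff}(t)-F_{eq}\|$ are both $O(\delta)$ with $\delta:=\max_t\|Y_d^{(i)}(t)\|+\max_t|l_d(t)-l_0|$, so $\|R(t,E)\|\le c_1\delta\|E\|+c_2\|E\|^2$ with $c_1,c_2$ independent of $t$.

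The final step is the Lyapunov estimate. Since $A-BK$ is Hurwitz there is $P=P^T\succ 0$ solving $(A-BK)^TP+P(A-BK)=-I$; with $V(E)=E^TPE$,
\[
\dot V\le-\|E\|^2+2\|PE\|\big(c_1\delta\|E\|+c_2\|E\|^2\big)\le-\big(1-2\lambda_{\max}(P)c_1\delta-2\lambda_{\max}(P)c_2\|E\|\big)\|E\|^2,
\]
which is negative on a ball $\{\|E\|<r\}$ once $\delta$ and $r$ are small enough; hence $E=0$ is uniformly asymptotically stable, in particular stable as claimed, and the computation pins down the explicit smallness required of the derivatives $Y_d^{(i)}$ and of $\max_t|l_d-l_0|$ in terms of $\lambda_{\max}(P),c_1,c_2$.

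The step I expect to be the main obstacle is the uniform perturbation bound: propagating $\delta$ through the flatness maps of Eqs.~\eqref{eq:x_d}--\eqref{eq:thetad_d} with constants independent of $t$, which requires keeping the denominators $\ddot y_{2,d}+g$ bounded below (this follows from $\ddot y_{2,d}\to 0$ under the slow-trajectory hypothesis together with the standing assumption $\ddot y_{2,d}>-g$), and, simultaneously, checking that the region of validity is nonempty along the motion — i.e., that for $\delta$ small the desired trajectory $X_d$, and hence $X$ for a small initial error, remains in the set where $l\neq 0$, $|\theta|<\pi/2$, and $f_2<0$, so that the inversion-based feedforward and the linearization stay well posed throughout.
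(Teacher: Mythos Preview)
Your proposal is correct and follows essentially the same route as the paper: both cast the error dynamics as the Hurwitz system $(A-BK)E$ from Lemma~1 plus a vanishing perturbation whose smallness is controlled by Lemma~\ref{lemma_slow}, then conclude stability via a perturbation argument---you carry out the converse-Lyapunov estimate explicitly, while the paper linearizes by first-order Taylor expansion about $(X_d,F_{ff})$, writes the perturbation as $(A(t)-A)E+(B-B(t))KE$ with $A(t),B(t)$ the Jacobians along the desired trajectory, and invokes Lemma~9.1 of Khalil. One small caveat in your write-up: $\|X_d(t)-X_{eq}\|$ is not $O(\delta)$ as stated, since $x_d(t)$ need not remain near $x_0$; however, $\Phi$ in Eq.~\eqref{eq_sys_dyn_isolated} is independent of the cart position $x$, so $D\Phi$ depends only on the remaining components and your Lipschitz estimate goes through once the $x$-coordinate is dropped from that norm (the paper's explicit computation of $A(t)-A$ and $B-B(t)$ confirms no $x_d-x_0$ term appears).
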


\begin{proof} 
    The error dynamics are given by
    \begin{equation}
        \dot{E} = \dot{X} - \dot{X}_d
                = f(X) + g(X) F - f(X_d) - g(X_d) F_{ff}
                .
    \end{equation}
    By Taylor Series expansion, 
    \begin{equation}
        f(X) \approx f(X_d) + \left( \frac{\partial f(X)}{\partial X}|_{X=X_d} \right)  (X-X_d)
                ,
    \end{equation}
    \begin{align}
        g(X)F &\approx g(X_d)F_{ff} + \left( \frac{\partial g(X)F}{\partial X}|_{F=F_{ff},X=X_d} \right)(X-X_d) \notag \\
        &\quad + \left( \frac{\partial g(X)F}{\partial F}|_{F=F_{ff},X=X_d} \right)(F-F_{ff})
        .
    \end{align}
    Therefore, the error dynamics becomes
    \begin{align}
        \dot{E} &\approx f(X_d) + \left( \frac{\partial f(X)}{\partial X}|_{X=X_d(t)} \right)  (X-X_d) \notag \\
        &\quad + g(X_d)F_{ff} + \left( \frac{\partial g(X)F}{\partial X}|_{F=F_{ff},X=X_d} \right)(X-X_d) \notag \\
        &\quad + \left( \frac{\partial g(X)F}{\partial F}|_{X=X_d(t),F=F_{ff}} \right)(F-F_{ff}) \notag \\
        &\quad - f(X_d) - g(X_d) F_{ff} \notag \\
        &= f(X_d) - f(X_d) + A(t)E + B(t)F_{fb} \notag \\
        &\quad + g(X_d)F_{ff} - g(X_d)F_{ff} \notag \\
        &= A(t)E + B(t)F_{fb}
        \notag \\
        &= A(t)E - B(t)KE
        \notag \\
        &= A(t)E - B(t)KE + AE - AE + BKE - BKE
        \notag \\
        &= (A - BK)E + (A(t) - A)E + (B - B(t))KE \notag \\
        &= (A - B K) E + P(t)
    \end{align}
    where
    \begin{align}
        A(t)    
            &= \left( \frac{\partial f(X)}{\partial X}|_{X=X_d} \right) 
             + \left( \frac{\partial g(X)F}{\partial X}|_{X=X_d,F=F_{ff}} \right),  \\
        B(t) &= \left( \frac{\partial g(X)F}{\partial F}|_{X=X_d,F=F_{ff}} \right) = g(X_d),
    \end{align}
     and $P(t)$ is a time varying perturbation to the exponentially stable linearized dynamics in Eq.~\eqref{eq:linearized_form} with terms
     \begin{equation}\label{eq:perturbation}
        P(t) = (A(t)-A)E + (B-B(t))KE, 
     \end{equation}
    \begin{align}
        (A(t)-A) =
         \begin{bmatrix}
            0 & 0 & 0 & 0 & 0 & 0 \\
            0 & 0 & - \frac{(mg+f_{2,ff})}{M} & 0 & 0 & 0 \\
            0 & 0 & 0 & 0 & 0 & 0 \\
            0 & 0 & \tilde{c}_1 & -\frac{2}{l_d}\dot{l}_d & \tilde{c}_2 & 0 \\
            0 & 0 & 0 & 0 & 0 & 0 \\
            0 & 0 & \tilde{c}_3 & 2l_d\dot{\theta}_d & \dot{\theta}_d^2 & 0
            \end{bmatrix}E
            ,
    \end{align}
    \begin{equation}
        \tilde{c}_1 = \frac{s_d(f_{1,ff}-s_df_{2,ff})}{Ml_d} + \frac{c_d^2f_{2,ff}l_0 + mgl_d}{Ml_0l_d} + \frac{g(l_d-l_0c_d)}{l_0l_d}
        \notag
        ,
    \end{equation}
    \begin{equation}
        \tilde{c}_2 = \frac{c_d(f_{1,ff}-s_df_{2,ff})}{Ml_d^2} + \frac{gs_d + 2\dot{l}_d\dot{\theta}_d}{l_d^2}
        \notag
        ,
    \end{equation}
    \begin{equation}
        \tilde{c}_3 = -\frac{c_df_{1,ff}}{M} + \frac{2s_dc_df_{2,ff}}{M} - gs_d
        \notag
        ,
    \end{equation}
    \begin{align}
        (B-B(t))KE =        
        \begin{bmatrix}
            0 & 0 \\
            0 & \frac{s_d}{M} \\
            0 & 0 \\
            \frac{l_0c_d-l_d}{Ml_dl_0} & -\frac{s_dc_d}{Ml_d} \\
            0 & 0 \\
            \frac{s_d}{M} & -\frac{s_d^2}{M}
            \end{bmatrix}KE
            .
    \end{align}
    The time varying perturbation $P(t)$ in Eq.~\eqref{eq:perturbation} can be made arbitrarily small for sufficiently-slowly varying desired trajectories (i.e., $\dot{l}_d$, $\theta_d \rightarrow 0$, $\dot{\theta}_d \rightarrow 0$, $f_{1,ff} \rightarrow 0$, and $f_{2,ff} \rightarrow -mg$ 
    by Lemma~\ref{lemma_slow}) with sufficiently small deviations in payload length ($l_d(t)- l_0$).
    As a result, the perturbation $P(t)$ is of the vanishing type as the error $E \to 0$ and satisfies the bound $\|P(t)\| \leq \gamma \|X\|$, 
    where $\gamma$ can be made arbitrarily small for sufficiently-slowly changing desired trajectories. Therefore, 
     by Lemma 9.1 in~\cite{khalil}, 
     the time-varying trajectory $X=X_d$
     is exponentially stable.
\end{proof}

Output tracking can be achieved  by combining the feedforward (from Eq. (\ref{eq:F1_ff}) and Eq. (\ref{eq:F2_ff})) and a stabilizing feedback (from Eq. (\ref{eq:feedback})  with $k_4=0$) as

\vspace{-0.1in} 
\begin{align}\label{eq:F1}
    f_1 &= \frac{Ml_d}{f_2c_d}(-m(c_dy_{1,d}^{(4)} + s_dy_{2,d}^{(4)})
    \notag \\ 
    &+ 2\dot{f}_2\dot{\theta}_d + \frac{1}{Ml_d}f_2^2s_dc_d - \frac{2}{l_d}f_2\dot{l_d}\dot{\theta}_d - \frac{g}{l_d}f_2s_d) 
    \notag \\ 
    &- k_1(x-x_d) - k_2(\dot{x}-\dot{x}_d) - k_3(\theta-\theta_d)
    ,
\\
\label{eq:F2}
    f_2 &=  \iint \left( m\left(s_d y_{1,d}^{(4)} - c_d y_{2,d}^{(4)}\right) + f_2 \dot{\theta}_d^2 \right) \, d^2t \notag \\
        & \quad + \dot{f}_2(0)t + f_2(0) - k_5(l-l_d) - k_6(\dot{l}-\dot{l}_d) 
        .
\end{align}

\vspace{-0.05in} 
\noindent

\subsection{Semi-autonomous controller}

\vspace{-0.05in}
In  semi-autonomous control, the operator's reference command (output velocity, $\dot{Y}_r$) from the joystick interface is used to autonomously plan a snap-continuous (i.e., $C^4$ continuous) trajectory, $Y_d(t)$, designed (i)~to avoid collisions, and (ii)~to be sufficiently smooth for output tracking using the differential flatness property.

\subsubsection{Reference Specification}
The reference output-velocity, $\dot{Y}_r$, is specified  by the operator as 
\begin{equation}\label{eq_Ydot}
\dot{Y}_r~=~
\begin{bmatrix} \dot{y}_{1,r} \\ \dot{y}_{2,r} \end{bmatrix} = 
\begin{bmatrix} \alpha_1j_{1} \\ \alpha_2j_{2}
\end{bmatrix}
,
\end{equation}

\vspace{-0.05in}
\noindent
where $\alpha_{1}$ and $\alpha_{2}$ are gains scaling the joystick inputs, $j_{1}$ and $j_{2}$, respectively.
The velocity reference, $\dot{Y}_r(t)$ at time $t$ is used to define the nominal reference point, $\tilde{Y}_r(t+T)$, to be reached within the time horizon, $T$, from the current time, $t$, i.e., $\tilde{Y}_r(t+T) = Y(t) + \dot{Y}_rT$.
The nominal reference point $\tilde{Y}_r(t+T)$ 
acts similar to velocity commands, as larger operator-inputs ($j_{1}, j_{2}$) will generate higher-speed trajectories.

\subsubsection{Collision Avoidance}
Collision is evaluated through projection from the crane robot~\cite{intention-guided}.
Intersection between obstacles and the projected path of the crane robot, $l_r$, connecting the output, $Y(t)$, to the nominal operator-specified reference point, $\tilde{Y}_r(t+T)$, indicate imminent collision. 
If the projected path falls inside an obstacle, then a corrected reference point, ${Y}_r(t+T)$, is selected to be outside the obstacle. 
Obstacle locations in the manufacturing environment are assumed to be known, allowing obstacles to be modeled by their axis-aligned bounding box.
These bounding boxes extend to the floor of the wing bay as the crane suspension prevents the camera payload from moving below obstacles, illustrated in Fig. \ref{fig:collision}.

An intersection between the projection and an obstacle indicates imminent collision, as the projection predicts that the planned trajectories will pass through the bounding box.
The set $L_w = \{l_{w,0}, l_{w,1}, \ldots, l_{w,N}\}$ of $N$ line segments consisting of the confined space's wall line segments and the obstacles' bounding boxes,  is checked for collision with the projected reference line segment, $l_r$.
The set of intersection points, $P = \{ Y_p \mid Y_p = l_r \cap l_{w,n}, \text{ } n \in \{0, 1, \ldots, N\} \}$, is used to obtain the corrected reference point, $Y_r$, located at a distance offset $\epsilon$ (to account for disturbance) along the projection, $l_r$, towards the current state, $Y$, from the closest intersection point, $Y_p^*$, if there is an intersection (see Fig. \ref{fig:collision}), such that
\begin{equation}\label{eq:corrected_ref}
Y_r =
\begin{cases}
Y_p^* - \epsilon \frac{Y_p^* - Y(t)}{|Y_p^* - Y(t)|}, & \text{if } P \neq \varnothing \text{ (intersection)}, \\
\tilde{Y}_r, & \text{if } P = \varnothing \text{ (otherwise)}.
\end{cases}
\end{equation}
where $Y_p^* = \argmin_{Y_p \in P} \|Y_p - Y(t)\|$.

\begin{figure}[!t]
\centering
\includegraphics[width=0.45\textwidth]{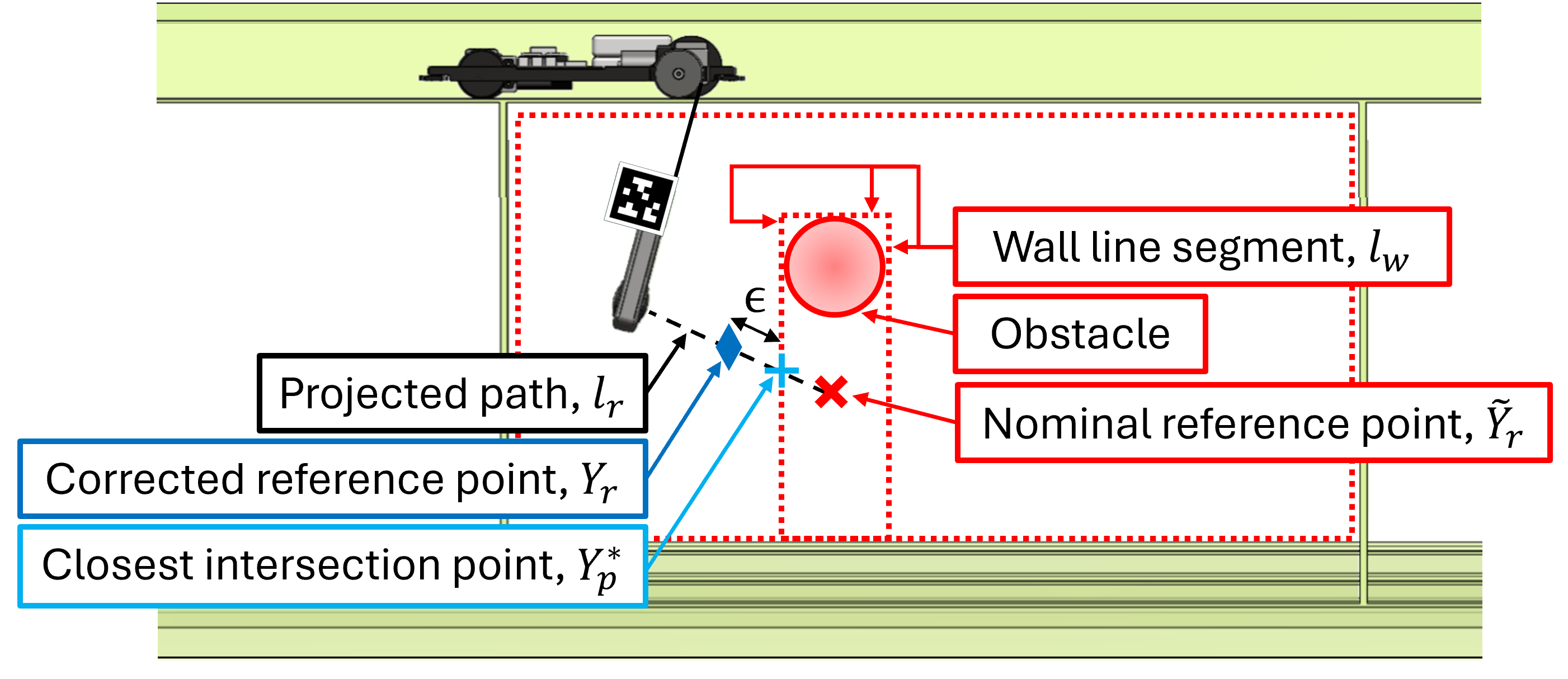}
\caption{
Illustration of collision prevention algorithm when the  nominal reference point (cross), $\tilde{Y}_r$, specified by the operator lies within an obstacle's bounding box.
The corrected reference point (diamond), $Y_r$, is specified at a distance $\epsilon$ towards the current position from the closest intersection point (plus), $Y_p^*$, of the reference line segment (dashed), $l_r$, and the set of bounding box walls (dotted), $L_w$.
}
 \vspace{-0.1in}
\label{fig:collision}
\end{figure}

\subsubsection{Trajectory Generation}\label{sec:traj_gen}
From the corrected reference point, $Y_r$, a snap-continuous, desired output-trajectory $Y_d( \cdot )$ is planned over the time interval $t \le \tau \le t+T$.
Five initial boundary conditions at time $\tau = t$ are found from the outputs and time derivatives of the outputs, $Y_d^{(i)}(t)$, computed from 
\eqref{eq:y},\eqref{eq_ydot_X},\eqref{eq_yddot_F},\eqref{eq_ydddot_F},\eqref{eq:y4_matrix}.
Similarly, five final boundary conditions at time $\tau = t+T$ are defined by the corrected reference point as $Y(t+T) = Y_r$, with final output derivatives set to zero (i.e. $\dot{Y}_d(t+T) = \ddot{Y}_{d}(t+T) = Y^{(3)}_{d}(t+T) = Y^{(4)}_{d}(t+T) = 0$) such that all desired trajectories are planned to reach a resting output state in the case of imminent collision~\cite{intention-guided}.
Moreover, selecting the final time trajectory derivatives to zero (especially, $ \ddot{Y}_{d}(t+T)=0, 
Y^{(3)}_{d}(t+T)=0$) results in zero final swing angle $\theta_d(t+T)=0$ and zero swing-angle velocity
 $\dot{\theta}_d(t+T)=0$, from Eqs.~\eqref{eq:theta_d} and \eqref{eq:thetad_d}, 
and thereby, removes 
residual oscillations at time $t+T$.
The minimal order  polynomial is ninth-order with ten coefficients, i.e., $c_{k,0}$-$c_{k,9}$, to satisfy the  ten boundary conditions on each output trajectory
$Y_{k,d}^{(i)}(t)$ and $Y_{k,d}^{(i)}(t+T)$ for $0\le i\le 4$, $k \in \{1,2\}$. Therefore, 
the desired trajectory for each output,
$y_{k,d}(\tau)$ ($k \in \{1,2\}$), is selected independently as
\begin{equation}\label{eq:poly}
y_{k,d}(\tau) = c_{k,9}\tau^9 + c_{k,8}\tau^8 + \ldots + c_{k,1}\tau + c_{k,0}
.
\end{equation}

\noindent 
Given the desired output trajectory, $Y_d$, as in Eq. \eqref{eq:poly} and its time derivatives found from the polynomials in Eq. \eqref{eq:poly}, the control inputs, $f_1$ and ${f}_2$, can be found from Eq. (\ref{eq:F1}) and Eq. (\ref{eq:F2}).
Fig. \ref{fig:block_diagram} shows a block diagram of the crane robot's control, and
Alg. 1 summarizes the semi-autonomous controller, which generates new polynomial trajectories in real time based on joystick inputs at each control timestep, similar to~\cite{craneOnline}.

\begin{algorithm}[!t]
\caption{Semi-Autonomous Controller}
\begin{algorithmic}[1]\label{alg:semi}
\State \textbf{Given:} Input gains $\alpha_1, \alpha_2$; Time horizon T; Bounding-box line segments $L_w$; Distance offset $\epsilon$; $i \in \{0, \ldots, 4\}$; Cart mass M; Payload mass m; Feedback gains K
\While{\textbf{true}}
    \State $\text{Get joystick inputs } j_1, j_2$
    \State $\text{Get state feedback } X(t) \text{ from external camera}$
    \State $Y_d^{(i)}(t) \gets \text{Eqs. \eqref{eq:y},\eqref{eq_ydot_X},\eqref{eq_yddot_F},\eqref{eq_ydddot_F}},\eqref{eq:y4_matrix}$
    \State $\dot{Y}_r \gets \begin{bmatrix} \alpha_1j_{1} & \alpha_2j_{2} \end{bmatrix}^T$
    \State $\tilde{Y}_r \gets Y(t) + \dot{Y}_r T$
    \State $l_r \gets \overline{Y(t)\tilde{Y}_r}$
    \State $P \gets \{ Y_p \mid Y_p = l_r \cap l_{w,n}, \text{ } n \in \{0, 1, \ldots, N\} \}$
    \State $Y_r \gets \text{Eq. \eqref{eq:corrected_ref}}$
    \State $Y_d^{(i)}(\tau) \gets \text{Eq. (\ref{eq:poly})}$
    \State $X_d(\tau) \gets \text{Eqs. \eqref{eq:x_d}-\eqref{eq:thetad_d}}$
    \State $f_1, f_2 \gets \text{Eqs. (\ref{eq:F1}),(\ref{eq:F2})}$
    \State $\text{Apply } f_1 \text{ and } f_2 \text{ to actuators}$
\EndWhile
\end{algorithmic}
\end{algorithm}

\section{Experimental results and discussion}\label{sec:experiment}

\subsection{Flatness-based input validation}\label{sec_auto_exp}
Compensating for the swing dynamics substantially improves tracking of the desired trajectory when compared to the uncompensated case.
To illustrate, the proposed flatness-based input in Eq. \eqref{eq:F1} and Eq. \eqref{eq:F2} is applied to the fast ramp-like trajectory presented in Section \ref{sec:probform} with a transition time of $T_t = 4$ seconds to compensate for the undesired residual oscillations.
Fig.~\ref{fig:auto_exp}(a) compares the time trajectories of the uncompensated response (neglecting the swing dynamics) to the compensated response (accounting for swing dynamics) for the horizontal $y_1$ and vertical $y_2$ camera positions.
The maximum amplitude of the residual oscillations upon reaching the inspection location are reduced by 89\%, as the oscillation magnitude remains below 1.0 degrees in the compensated case, compared to 9.1 degrees in the uncompensated case, as shown in Fig. \ref{fig:auto_exp}(b).

\begin{figure}[!t]
\centering
\includegraphics[width=0.5\textwidth]{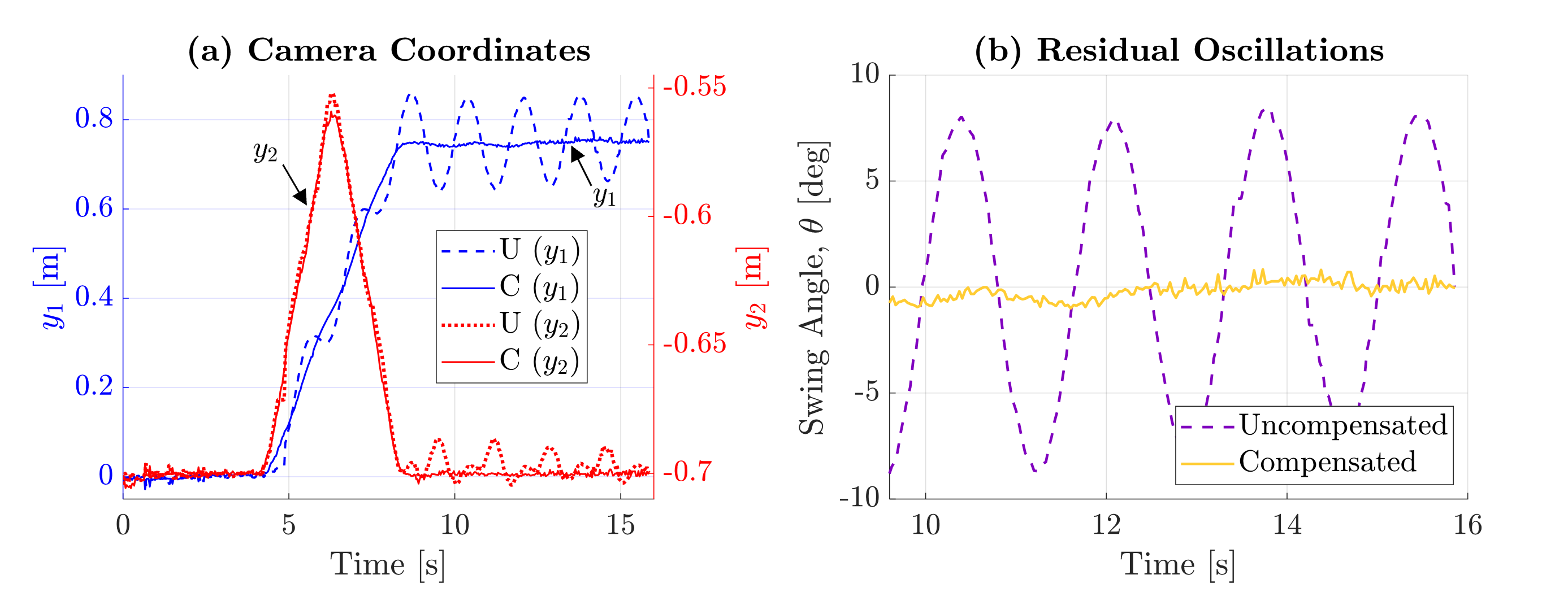}
\caption{
Tracking comparison (see supplementary material for video) of the fast ramp-like trajectory  (transition time of $T_t = 4$ seconds) with 
compensated (C)  and uncompensated (U)  swing dynamics 
for the  ramp-like trajectory described in Section \ref{sec:probform}.  (a)~Horizontal camera position, $y_1$, and vertical camera position $y_2$, and
(b)~residual oscillations after transitioning to the inspection location.
}
\vspace{-0.02151in}
\label{fig:auto_exp}
\end{figure}
\vspace{-0.05in}

\subsection{ Evaluating sensitivity to hyperparameters}\label{sec:e2e_hyperparameter}

Sensitivity of collision avoidance and potential oscillations with the proposed approach is investigated for (a)~varying joystick gain $\alpha_1$ that cause different speeds at which the obstacle is approached and (b)~modeling errors in the cart mass $M$ and the payload mass $m$. In all cases, collision is avoided and the increase in oscillations due to modeling error is removed by augmenting the differentially flat feedforward $F_{ff}$ with feedback $F_{fb}$.  

\subsubsection{Simulation parameters} 
Collision avoidance is evaluated by investigating the horizontal payload response $y_1$ when starting motion from rest and exerting the maximum horizontal joystick command, $(j_1, j_2) = (1, 0)$ for 10 s, towards an obstacle, which has its nearest bounding box wall boundary at $y_1 = 0.75$ m, as shown in Fig.\ref{fig:e2e_hyperparameter}(a).
Nominal parameters are selected to match the human teleoperation experiments. Specifically, 
the cart and camera payload masses are varied from nominal values of $M = 0.815$ kg and $m = 0.225$ kg, respectively. The reference point generation uses a time horizon of $T = 1.5$ s, the scaling gain $\alpha_1$ in the horizontal direction is varied around a nominal value of $\alpha_1=0.12$,   and a distance offset of $\epsilon = 8$ cm is applied for collision avoidance, which match the nominal values in teleoperation experiments.  
Simulated horizontal trajectory responses $y_1$ for different hyperparameters are shown in Fig.~\ref{fig:e2e_hyperparameter}, and discussed below. 

\begin{figure}[!t]
\centering
\includegraphics[width=0.5\textwidth]{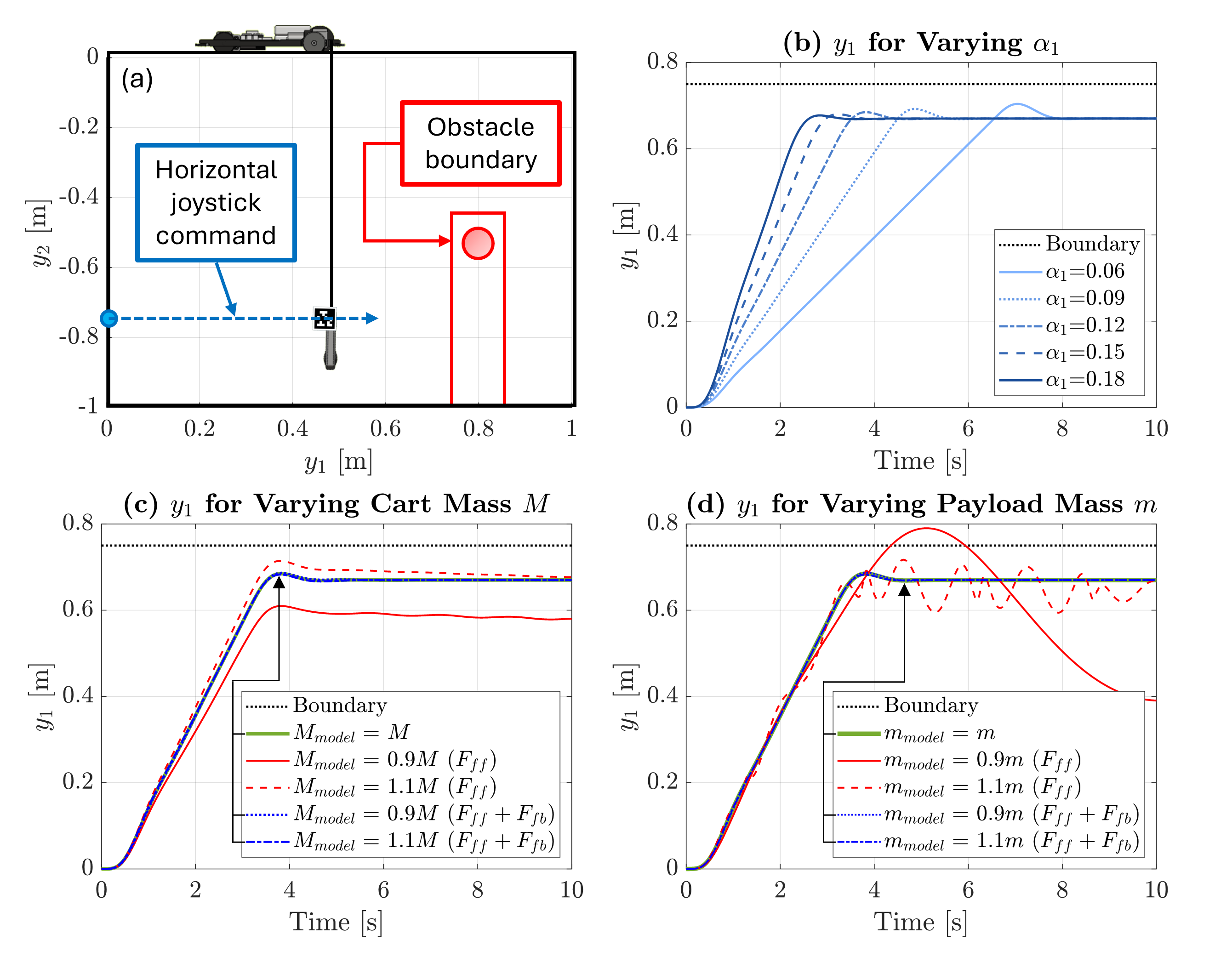}
\caption{
(a) Simulation setup for crane robot where the camera payload approaches an obstacle boundary horizontally.
(b) Horizontal camera position, $y_1$, for varying joystick scaling gain, $\alpha_1$, demonstrating successful collision avoidance with varying speeds due to changes in joystick gain parameter. Feedback $F_{fb}$ reduces sensitivity of feedforward $F_{ff}$ in the horizontal camera position, $y_1$ seen by comparing cases, with and without feedback $F_{fb}$, 
under (c) cart mass modeling errors of $\pm 10 \%$  and  (d) payload mass modeling errors of $\pm 10 \%$. 
}
\label{fig:e2e_hyperparameter}
\end{figure}

\subsubsection{Obstacle avoidance at varying speeds}  
Teleoperation trajectories are governed by joystick scaling gain,  and the time horizon.   
To demonstrate the impact of speed variations, 
the horizontal camera coordinate, $y_1$, responses for five different values of horizontal-axis joystick gain $\alpha_1$ are shown in Fig.~\ref{fig:e2e_hyperparameter}(b). Note that successful collision avoidance is achieved by the algorithm when the joystick parameter is varied.

\subsubsection{Robustness to modeling errors}  
The semi-autonomous controller demonstrates robustness to modeling errors due to its feedback terms.  
The modeled cart mass, $M_\text{model}$, is varied by $\pm 10\%$.  
The response to erroneous flatness-based feedforward input terms, derived from Eqs.~\eqref{eq:F1} and \eqref{eq:F2}, is analyzed by setting all feedback gains to $K = 0$, leading to large errors, as shown in Fig.~\ref{fig:e2e_hyperparameter}(c).  
Introducing feedback gains of $k_1 = 0.01$ N/mm, $k_2 = 0.01$ N/mm$\cdot$s, $k_3 = -0.01$ N/rad, $k_4 = 0$ N/rad$\cdot$s, $k_5 = 0.01$ N/mm, and $k_6 = 0.01$ N/mm$\cdot$s stabilizes trajectories around the expected response.  
The simulation is repeated with the modeled camera payload mass, $m_\text{model}$, varied by $\pm 10\%$, with similar results shown in Fig.~\ref{fig:e2e_hyperparameter}(d).  
Errors from flatness-based feedforward terms demonstrate greater sensitivity to camera payload mass variations compared to cart mass variations, but feedback gains successfully address sensitivity in both cases.

\subsection{User trials}
Evaluated against the conventional industrial gantry crane control approach of decoupled velocity control (VC)~\cite{crane_control} without swing-dynamics compensation, which relies on operator compensation of  oscillations and collision avoidance, the semi-autonomous control (SC) improved efficiency and safety for 12 participants in a fastener inspection task.

\subsubsection{Fastener inspection task}
The fastener inspection task asks participants to move across the confined space over a pipeline obstacle to identify whether three fasteners are properly seated (i.e. no gaps under the fastener), as depicted in Fig. \ref{fig:task}.
Each participant performed three trials; in each trial, the participant performed the task twice in a single-blind manner, once with VC and once with SC.
A pseudorandom number generator specified the order that each controller was used as well as the fastener gap configuration. 
To complete the task, participants recorded which fasteners contained gaps before capturing an image from the camera payload and confirming task completion. 
In the event of collision, the task was recorded as a failed attempt, and the participant restarted the task.
After each task, participants completed a
questionnaire. 

\begin{figure}[!t]
\centering
\includegraphics[width=0.45\textwidth]{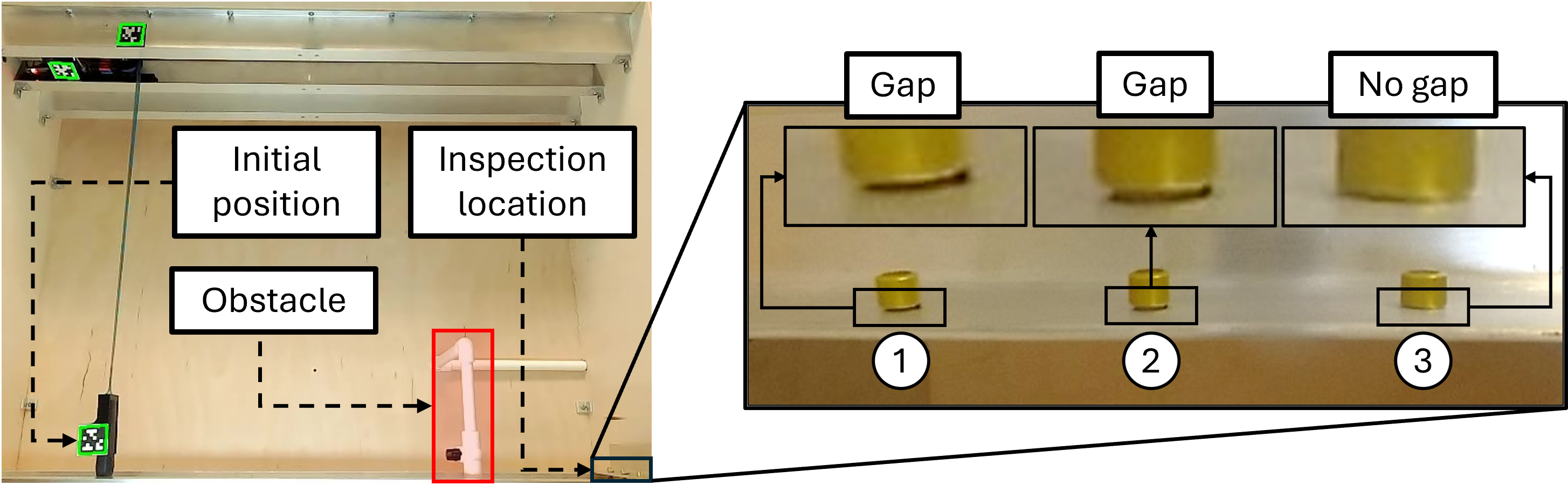}
\caption{
The experimental setup for the fastener inspection task. 
Participants move the crane robot from an initial position over the pipeline obstacle to the inspection location with three fasteners. 
The sample inset image shows  fasteners 1 and 2 with gaps (which can be observed visually), while fastener 3 is properly seated and has no gap.
}
 \vspace{-0.1in}
\label{fig:task}
\end{figure}

\begin{figure*}[!ht]
\centering
\includegraphics[width=\textwidth]{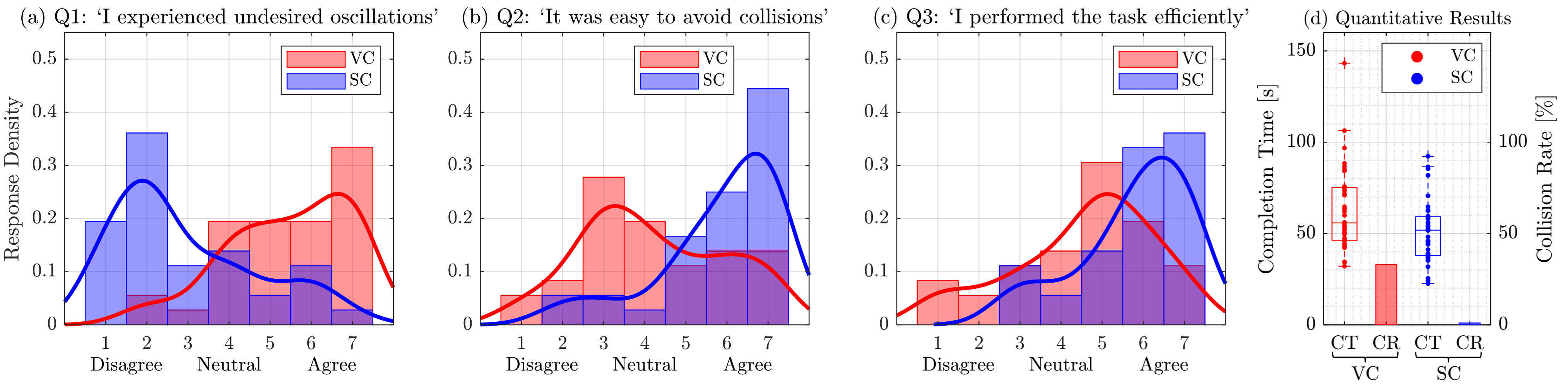}
\caption{
Results from 36 teleoperated fastener inspection trials. (a)-(c) Histogram of response density from the subjective questionnaire from a seven point Likert scale (1: Strongly disagree; 4: Neutral; 7: Strongly agree) comparing responses collected after using VC (without swing-dynamics compensation) and SC (with swing-dynamics compensation). The general trend of response density is represented as a line generated through a squared exponential kernel smoothing with a bandwidth of 0.7. 
(d) Quantitative measures of task completion time (CT) and collision rate (CR) for decoupled velocity control (VC) and semi-autonomous control (SC).
}
\vspace{-0.1in}
\label{fig:teleop_results}
\end{figure*}

\subsubsection{Experiment parameters}\label{sec:exp_params}
Experiments were completed by moving the camera payload from rest at initial output coordinates, $\left(y_1,y_2\right)$, of $(0,-0.72)$ m to inspect three fasteners seated at $(0.88,-0.72)$ m.
The reference velocity, $\dot{Y}_r$ was generated by scaling the joystick inputs, $j_1$ and $j_2$ ranging from $[-1,1]$, by gains $\alpha_1=0.12$ and $\alpha_2=0.04$, respectively, as in Eq. \eqref{eq_Ydot}.
The cart mass, $M$, and camera payload mass, $m$, were 0.815 kg and 0.225 kg, respectively.
VC tracked $\begin{bmatrix}\dot{x} & \dot{l}\text{\,}\end{bmatrix}^T=\dot{Y}_r$ using feedback control, while SC applied Alg. 1 to plan trajectories over a time horizon, $T=1.5$ s, ensuring trajectories remain in the confined space bounded virtually by $y_1\in[0,0.88]$ m and $y_2\in[-0.75,-0.3]$ m and avoided collision with the pipeline obstacle modeled as a bounding box parameterized by $y_1\in[0.575,0.65]$ m and $y_2\in[-0.75,-0.5]$ m using a distance offset of $\epsilon=8$ cm for a conservative buffer.
Feedback gains were experimentally tuned to address perturbations.
Gains $k_1 = 0.8~\frac{\text{N}}{\text{mm}}$ and $k_2 = 0.8~\frac{\text{N}}{\text{mm}\cdot\text{s}}$ for cart position, $x$, and velocity, $\dot{x}$, were increased incrementally until tracking error was reduced without overshoot for cart trajectories (e.g. tracking the horizontal trajectory, $y_1(\tau)$, in Section~\ref{sec:probform}). 
Similarly, gains $k_5 = 0.2~\frac{\text{N}}{\text{mm}}$ and $k_6 = 0.2~\frac{\text{N}}{\text{mm}\cdot\text{s}}$ for camera payload length, $l$, and velocity, $\dot{l}$, were incrementally increased until error was reduced without overshoot when tracking length trajectories (e.g., tracking the vertical trajectory, $y_2(\tau)$, in Section~\ref{sec:probform}).
The gain $k_3 = -0.05~\frac{\text{N}}{\text{rad}}$ for the swing angle, $\theta$, was incrementally increased to suppress disturbed oscillations within a few cycles.

\subsubsection{Reduced oscillation}

Participants reported a decrease in undesired oscillations when using SC compared to VC, as shown through responses to Q1 in Fig. \ref{fig:teleop_results}(a), supported by a Wilcoxon signed rank test, which revealed a statistically significant difference between the two controllers, rejecting the null hypothesis ($p < 0.05$).
Experiencing an increase in undesired oscillations while using VC compared with SC is consistent with the autonomous experiments of Section \ref{sec_auto_exp}, as VC leaves unaccounted residual oscillations for participants while SC leverages the differential flatness to remove oscillations.

\subsubsection{Safer inspection}

The use of SC resulted in safer inspection for both the crane robot and surrounding aircraft structure with a collision rate of 0\% compared to a collision rate of 33\% under VC (Fig. \ref{fig:teleop_results}(d)).
The combination of reduced uncontrolled oscillation and reference point corrections led to participants reporting an ease of avoiding collisions through Q2 while using SC compared with VC (Fig. \ref{fig:teleop_results}(b)).
A Wilcoxon signed rank test confirmed this perception, rejecting the null hypothesis ($p < 0.05$).

\subsubsection{Improved efficiency}
Even neglecting failures due to collisions with VC, SC improved inspection efficiency by reducing mean task completion time by 18.7\% (Fig. \ref{fig:teleop_results}(d)) to 51.2 s compared to a mean task completion time under VC of 63.0 s.
A paired-t test demonstrated a statistically significant difference between responses collected from VC and SC, rejecting the null hypothesis ($p<0.05$). 
Participants also perceived improved task efficiency using SC as opposed to VC through Q3 of the subjective questionnaire (Fig. \ref{fig:teleop_results}(c)), where a Wilcoxon signed rank test confirmed this perception, rejecting the null hypothesis ($p < 0.05$).

\section{Conclusion}\label{sec:conclusion}

This work presented a crane robot for teleoperated in-wing confined space inspection.
To remove undesired oscillations during teleoperation, the swing dynamics of the crane robot are accounted for by exploiting the differentially-flat dynamics to generate sufficiently smooth trajectories for tracking,  while avoiding collision with surrounding obstacles. This enabled  semi-autonomous control with reduced undesired oscillations, eliminated collisions, and enhanced inspection efficiency during teleoperation.
Future work will focus on considering the crane robot's dynamic constraints to plan high-speed, optimal time trajectories while minimizing snap and oscillation during motion within confined spaces.






\section*{ACKNOWLEDGMENT}
The authors thank Shuonan Dong and Jonathan Ahn for guidance on the system design.


\bibliographystyle{ieeetr}
\bibliography{bib}

\begin{thebibliography}{10}

\bibitem{eeloscope}
F.~Heilemann, A.~Dadashi, and K.~Wicke, ``Eeloscope—towards a novel endoscopic system enabling digital aircraft fuel tank maintenance,'' {\em Aerospace}, vol.~8, no.~5, 2021.

\bibitem{path_inspection}
N.~Guochen, W.~Li, G.~Qingji, and H.~Dandan, ``Path-tracking algorithm for aircraft fuel tank inspection robots,'' {\em International Journal of Advanced Robotic Systems}, vol.~11, no.~5, p.~82, 2014.

\bibitem{holecleaning}
P.~Owan, J.~Garbini, and S.~Devasia, ``Faster confined space manufacturing teleoperation through dynamic autonomy with task dynamics imitation learning,'' {\em IEEE Robotics and Automation Letters}, vol.~5, no.~2, pp.~2357--2364, 2020.

\bibitem{snake}
R.~Buckingham, V.~Chitrakaran, R.~Conkie, G.~Ferguson, A.~Graham, A.~Lazell, M.~Lichon, N.~Parry, F.~Pollard, A.~Kayani, {\em et~al.}, ``Snake-arm robots: a new approach to aircraft assembly,'' tech. rep., SAE Technical Paper, 2007.

\bibitem{driving_hole}
M.~K. Dhoot and I.-S. Fan, ``Design and development of a mobile robotic system for aircraft wing fuel tank inspection,'' {\em SAE International Journal of Advances and Current Practices in Mobility}, vol.~4, no.~2022-01-0042, pp.~1126--1137, 2022.

\bibitem{narrow_snake}
Z.~Ji, G.~Song, F.~Wang, Y.~Li, and A.~Song, ``Design and control of a snake robot with a gripper for inspection and maintenance in narrow spaces,'' {\em IEEE Robotics and Automation Letters}, vol.~8, no.~5, pp.~3086--3093, 2023.

\bibitem{pneumatic_worm_pipe}
T.~Yamamoto, S.~Sakama, and A.~Kamimura, ``Pneumatic duplex-chambered inchworm mechanism for narrow pipes driven by only two air supply lines,'' {\em IEEE Robotics and Automation Letters}, vol.~5, no.~4, pp.~5034--5042, 2020.

\bibitem{wheeled_pipe}
J.~Wang, Y.~Wang, L.~Peng, H.~Zhang, H.~Gao, C.~Wang, Y.~Gao, H.~Luo, and Y.~Chen, ``Transformable inspection robot design and implementation for complex pipeline environment,'' {\em IEEE Robotics and Automation Letters}, vol.~9, no.~6, pp.~5815--5822, 2024.

\bibitem{crane_control}
S.~Bonnabel and X.~Claeys, ``The industrial control of tower cranes: An operator-in-the-loop approach [applications in control],'' {\em IEEE Control Systems Magazine}, vol.~40, no.~5, pp.~27--39, 2020.

\bibitem{semi_auto_obst}
M.~Rubagotti, T.~Taunyazov, B.~Omarali, and A.~Shintemirov, ``Semi-autonomous robot teleoperation with obstacle avoidance via model predictive control,'' {\em IEEE Robotics and Automation Letters}, vol.~4, no.~3, pp.~2746--2753, 2019.

\bibitem{input_shaping}
T.~Singh and W.~Singhose, ``Input shaping/time delay control of maneuvering flexible structures,'' in {\em Proceedings of the 2002 American Control Conference (IEEE Cat. No. CH37301)}, vol.~3, pp.~1717--1731, IEEE, 2002.

\bibitem{fliess}
M.~Fliess, J.~L{\'e}vine, P.~Martin, and P.~Rouchon, ``Flatness and defect of non-linear systems: introductory theory and examples,'' {\em International journal of control}, vol.~61, no.~6, pp.~1327--1361, 1995.

\bibitem{kolar2013}
B.~Kolar and K.~Schlacher, ``Flatness based control of a gantry crane,'' {\em IFAC Proceedings Volumes}, vol.~46, no.~23, pp.~487--492, 2013.

\bibitem{kolar2017}
B.~Kolar, H.~Rams, and K.~Schlacher, ``Time-optimal flatness based control of a gantry crane,'' {\em Control Engineering Practice}, vol.~60, pp.~18--27, 2017.

\bibitem{yu}
Z.~Yu and W.~Niu, ``Flatness-based backstepping antisway control of underactuated crane systems under wind disturbance,'' {\em Electronics}, vol.~12, no.~1, p.~244, 2023.

\bibitem{cable_UAV}
J.~Zeng, P.~Kotaru, M.~W. Mueller, and K.~Sreenath, ``Differential flatness based path planning with direct collocation on hybrid modes for a quadrotor with a cable-suspended payload,'' {\em IEEE Robotics and Automation Letters}, vol.~5, no.~2, pp.~3074--3081, 2020.

\bibitem{UAV_UGV}
C.~Hebisch, S.~Jackisch, D.~Moormann, and D.~Abel, ``Flatness-based model predictive trajectory planning for cooperative landing on ground vehicles,'' in {\em 2021 IEEE Intelligent Vehicles Symposium (IV)}, pp.~1031--1036, IEEE, 2021.

\bibitem{VTOL}
E.~Tal, G.~Ryou, and S.~Karaman, ``Aerobatic trajectory generation for a vtol fixed-wing aircraft using differential flatness,'' {\em IEEE Transactions on Robotics}, 2023.

\bibitem{craneOnline}
M.~Thomas, T.~Werner, and O.~Sawodny, ``Online trajectory generation and feedforward control for manually-driven cranes with input constraints,'' in {\em 2021 IEEE Conference on Control Technology and Applications (CCTA)}, pp.~654--659, 2021.

\bibitem{linespyx}
Y.~Gao, G.~Song, S.~Li, F.~Zhen, D.~Chen, and A.~Song, ``Linespyx: A power line inspection robot based on digital radiography,'' {\em IEEE Robotics and Automation Letters}, vol.~5, no.~3, pp.~4759--4765, 2020.

\bibitem{camera_following}
D.~Rakita, B.~Mutlu, and M.~Gleicher, ``An autonomous dynamic camera method for effective remote teleoperation,'' in {\em Proceedings of the 2018 ACM/IEEE International Conference on Human-Robot Interaction}, pp.~325--333, 2018.

\bibitem{apriltag}
E.~Olson, ``Apriltag: A robust and flexible visual fiducial system,'' in {\em 2011 IEEE international conference on robotics and automation}, pp.~3400--3407, IEEE, 2011.

\bibitem{concrete_hammer}
Y.~Nishimura, S.~Takahashi, H.~Mochiyama, and T.~Yamaguchi, ``Automated hammering inspection system with multi-copter type mobile robot for concrete structures,'' {\em IEEE Robotics and Automation Letters}, vol.~7, no.~4, pp.~9993--10000, 2022.

\bibitem{interfaces_study}
Z.~Yu, J.~Luo, H.~Zhang, E.~Onchi, and S.-H. Lee, ``Approaches for motion control interface and tele-operated overhead crane handling tasks,'' {\em Processes}, vol.~9, no.~12, p.~2148, 2021.

\bibitem{atag_relative}
D.~Tang, T.~Hu, L.~Shen, Z.~Ma, and C.~Pan, ``Apriltag array-aided extrinsic calibration of camera--laser multi-sensor system,'' {\em Robotics and biomimetics}, vol.~3, pp.~1--9, 2016.

\bibitem{khalil}
H.~Khalil, {\em Nonlinear Systems}.
\newblock Pearson Education, Prentice Hall, 2002.

\bibitem{intention-guided}
X.~Yang, J.~Cheng, and N.~Michael, ``An intention guided hierarchical framework for trajectory-based teleoperation of mobile robots,'' in {\em 2021 IEEE International Conference on Robotics and Automation (ICRA)}, pp.~482--488, 2021.

\end{thebibliography}


\end{document}